\documentclass[11pt]{article}
\RequirePackage[l2tabu, orthodox]{nag}

\usepackage{titlesec}
\titleformat*{\section}{\Large\bfseries}
\usepackage{microtype,bbm}
\usepackage{graphicx}
\usepackage{makecell}
\usepackage{natbib}
\usepackage[T1]{fontenc}
\usepackage{lmodern}

\newcommand{\nocontentsline}[3]{}
\let\origcontentsline\addcontentsline
\newcommand\stoptoc{\let\addcontentsline\nocontentsline}
\newcommand\resumetoc{\let\addcontentsline\origcontentsline}

\usepackage[table,xcdraw]{xcolor}
\usepackage{nicefrac}
\usepackage{mathtools}
\usepackage{booktabs} 
\usepackage{amsmath}
\usepackage[colorlinks, urlcolor=blue, citecolor=orange, linkcolor=green]{hyperref}

\usepackage{amsmath}
\usepackage{amssymb}
\usepackage{mathtools}
\usepackage{amsthm}

\def\RR{{\mathbb R}}    
\def\QQ{{\mathbb Q}}    
\def\PP{{\mathbb P}}     

\def\11{{\mathbf 1}}    

   \def\cM{{\mathcal M}}    \def\cH{{\mathcal H}}           \def\cP{{\mathcal P}}       \def\cF{{\mathcal F}}    \def\cX{{\mathcal X}} \def\cY{{\mathcal Y}}

\usepackage[capitalize,noabbrev]{cleveref}

\usepackage{enumitem}
\theoremstyle{plain}
\usepackage{thm-restate}
\usepackage{thmtools, thm-restate}
\newtheorem{theorem}{Theorem}[section]

\theoremstyle{definition}
\newtheorem{definition}[theorem]{Definition}

\theoremstyle{remark}

\usepackage{algorithm, algpseudocode}%
\floatname{algorithm}{Algorithm}

\usepackage{amsthm}
 
\algtext*{EndFunction}
\usepackage{multicol}
\usepackage{caption}
\usepackage{minitoc}
\usepackage{subcaption}

\usepackage[english]{babel}
\usepackage[parfill]{parskip}
\usepackage{afterpage}
\usepackage{framed}
\usepackage{nicefrac}
\usepackage[utf8]{inputenc} 
\usepackage[T1]{fontenc}    
\usepackage{amsthm}
\usepackage{caption}
\usepackage{thmtools}
\usepackage{algorithm, algpseudocode}%
\floatname{algorithm}{Algorithm}

\usepackage{amsthm}

\algtext*{EndFunction}
\usepackage{thm-restate}
\usepackage{multicol}
\usepackage{caption}
\usepackage{minitoc}

\usepackage{url}            
\usepackage{booktabs}       
\usepackage{amsfonts}       
\usepackage{bm}
\usepackage[title]{appendix}
\usepackage[margin=1in]{geometry}
\usepackage{tcolorbox}
\usepackage{adjustbox}

\usepackage{tabularx}
\usepackage[labelfont=bf,format=plain,justification=raggedright,singlelinecheck=false]{caption}
\usepackage{enumitem}
\usepackage{stackrel}
\usepackage{mathtools}
\usepackage{authblk}

\usepackage{graphicx}
\usepackage{url}            
\usepackage{booktabs}       
\usepackage{amsfonts}       
\usepackage{bm}

\newcommand{\circint}{\mathop{\mathpalette\docircint\relax}\!\int}
\newcommand{\docircint}[2]{%
  \ifx#1\displaystyle
    \displaycircint
  \else
    \normalcircint{#1}%
  \fi
}
\newcommand{\displaycircint}{\displaystyle\mathsf{c}\mkern-18mu}
\newcommand{\normalcircint}[1]{%
  \smallerc{#1}\ifx#1\textstyle\mkern-9mu\else\mkern-8.2mu\fi
}
\newcommand{\smallerc}[1]{%
  \vcenter{\hbox{$\ifx#1\textstyle\scriptstyle\else\scriptscriptstyle\fi\mathsf{c}$}}%
}

\usepackage[textsize=tiny]{todonotes}

\title{Quantifying Epistemic Predictive Uncertainty in Conformal Prediction}

\author[1]{\textbf{Siu Lun Chau}}
\author[2]{\textbf{Soroush H. Zargarbashi}}
\author[3,4]{\textbf{Yusuf Sale}}
\author[5, 6]{\textbf{Michele Caprio}}

\affil[1]{\small{Epistemic Intelligence \& Computation Lab, College of Computing \& Data Science, Nanyang Technological University, Singapore}}
\affil[2]{\small{CISPA Helmholtz Center for Information Security, Germany}}
\affil[3]{\small{Munich Center for Machine Learning; Institute of Informatics, LMU Munich, Germany}}
\affil[4]{\small{Munich Center for Machine Learning, Germany}}
\affil[5]{\small{Manchester Centre for AI Fundamentals, Manchester, United Kingdom}}
\affil[6]{\small{Department of Computer Science, The University of Manchester, United Kingdom}}

\begin{document}

\maketitle


\begin{abstract}
    We study the problem of quantifying epistemic predictive uncertainty (EPU)---that is, uncertainty faced at prediction time due to the existence of multiple plausible predictive models---within the framework of conformal prediction~(CP). To expose the implicit model multiplicity underlying CP, we build on recent results showing that, under a mild assumption, any full CP procedure induces a set of closed and convex predictive distributions, commonly referred to as a credal set. Importantly, the conformal prediction region~(CPR) coincides exactly with the set of labels to which all distributions in the induced credal set assign probability at least $1-\alpha$. As our first contribution, we prove that this characterisation also holds in split CP. Building on this connection, we then propose a computationally efficient and analytically tractable uncertainty measure, based on \emph{Maximum Mean Imprecision}, to quantify the EPU by measuring the degree of conflicting information within the induced credal set. Experiments on active learning and selective classification demonstrate that the quantified EPU provides substantially more informative and fine-grained uncertainty assessments than reliance on CPR size alone. More broadly, this work highlights the potential of CP serving as a principled basis for decision-making under epistemic uncertainty.
\end{abstract}

Keywords: Conformal prediction, imprecise probabilities, predictive uncertainty quantification

\stoptoc
\section{Introduction}

Conformal prediction~(CP)~\citep{conformal_tutorial} is a set-based uncertainty quantification framework that guarantees finite-sample coverage under the assumption that the data generating process is exchangeable. Owing to its conceptual simplicity and seamless integration into existing machine learning~(ML) pipelines,
CP has seen broad adoption in both theoretical and applied work~\citep{balasubramanian2014conformal}.

In this work, \emph{we study the problem of quantifying the epistemic predictive uncertainty~(EPU) for conformal prediction.} Like the broader distinction between general ``aleatoric''~(inherent stochasticity) and ``epistemic''(reducible knowledge-level) uncertainty, EPU does not admit one precise mathematical definition~\citep{hullermeier_aleatoric_2021}. Instead, it is typically characterised conceptually as ``\emph{the difficulty of making predictions about outcome $Y$ given an observation $X=x$ due to model uncertainty, i.e., when multiple plausible predictive models exist.}'' Crucially, EPU pertains to uncertainty of performing a prediction, rather than uncertainty intrinsic to the predicted outcome. For instance, in Bayesian ML, model uncertainty is represented via a posterior distribution $P(\theta\mid D)$ over model parameters $\theta$ after observing dataset $D$. Through information-theoretic approaches, EPU is then quantified by measuring the average reduction in predictive entropy after conditioning on model parameter $\theta$~\citep{kendall2017uncertainties}. Beyond trustworthiness and interpretability, quantifying EPU enables a principled separation from aleatoric uncertainty, supporting and improving downstream decision-making tasks that rely on the numerical assessment of model predictive confidence, such as active learning~\citep{settles2009active}, learning to reject~\citep{cortes2016learning} and defer~\citep{madras2018predict}, and Bayesian optimisation~\citep{shahriari2015taking}.

In contrast to Bayesian ML~\citep{gal2016uncertainty}, the problem of quantifying EPU within CP has received comparatively little attention. This may be because it appears to admit a seemingly natural proxy: the size of the conformal prediction region~(CPR)~\citep{karimi2023quantifying}, measured as the number of elements in the prediction set for classification and as the interval length for regression. However, the CPR size at a fixed level $\alpha$ characterises uncertainty about the predicted region itself, rather than the uncertainty encountered when producing that prediction. As such, it does not directly quantify EPU. This limitation is analogous to assessing epistemic predictive uncertainty in Bayesian models solely via a $(1-\alpha)\%$ highest density region of the posterior predictive~\citep{hyndman}~(Figure~\ref{fig:left}). While such regions may correlate with EPU in certain scenarios, they are not designed to capture uncertainty in the predictive mechanism as a whole. As a concrete example,  Figure~\ref{fig: p_val_dist_example} shows two predictions that yield identical CPRs at $\alpha=0.05$, yet exhibit markedly different ``confidence profiles'': one is supported by a much more skewed conformal p-value distribution than the other, indicating different levels of difficulties encountered when constructing the prediction set. This observation motivates our central goal: to formalise such intuition within a rigorous mathematical framework and to characterise epistemic predictive uncertainty through the structure of these conformal p-value profiles.

\begin{figure}
    \centering
    \includegraphics[width=0.6\linewidth]{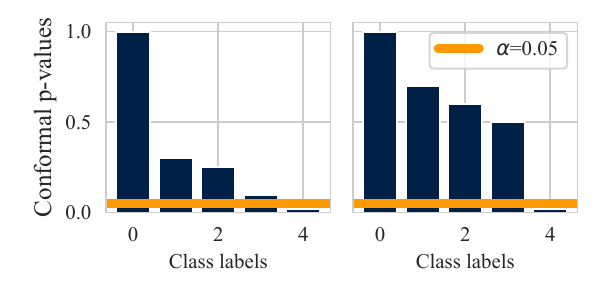}
    \caption{Although the two instances share the same prediction set, the left instance is evidently more certain, as reflected by uniformly smaller conformal p-values for labels $(1,2,3)$. How can this intuition be formalised?}
    \label{fig: p_val_dist_example}
\end{figure}

\textbf{Contributions.} To achieve our goal, we build on recently established connections between conformal prediction and imprecise probabilities~(IP)~\citep{walley}, a framework that generalises classical probability theory to model higher-order uncertainty through non-linear expectations~\citep{decooman} and sets of probability measures, known as credal sets~\citep{levi2}. In particular, \citet{cella} demonstrated that, under a mild technical assumption, any full (transductive) CP procedure induces a credal set of predictive distributions over the target variable $Y$. This result was subsequently strengthened by \citet{caprio2025conformal,caprio2025joyscategoricalconformalprediction}, who proved that CPRs coincide with imprecise highest density regions~\citep{coolen}, i.e., the generalisation of highest density regions to sets of probabilities associated with such predictive credal sets.

Taken together, these results imply that \emph{every full CP procedure implicitly induces a predictive credal set} (Figure~\ref{fig:right}). Building on this line of work, we further show that these guarantees, previously established only for full CP, also hold for split~(inductive) CP~\citep{papadopoulos2002inductive}, which is more commonly used in practice for computational efficiency. This extension provides the mathematical foundation for our approach to quantifying EPU in CP by \textbf{measuring the degree of ``conflicting information'' encoded in the implicit predictive credal set.} This perspective aligns with recent approaches to quantifying EPU for imprecise probabilistic predictors, i.e. models that return credal sets as predictions~\citep{wang2024credal,wangcredal,WANG2025107198,caprio_IBNN,caprio2025credalintervaldeepevidential}. However, unlike these settings---where predictive credal sets are typically constructed as convex hulls of finitely many probabilistic predictors---the implicit predictive credal set in CP arises as a set of distributions dominated by a single imprecise probability measure~\citep{dubois}, as characterised by \citet[Theorem 1.]{martin2025efficientmontecarlomethod}. This structural distinction motivates us to build on and adapt the recently introduced EPU quantification functional, the maximum mean imprecision (MMI)~\citep{chau2025integral}, to the conformal setting. We refer to our resulting methodology as \textbf{MMI-CP}.

\textbf{Paper structure.} The paper is structured as follows. Section~\ref{sec: prelim} reviews CP and IP, followed by our main results and derivations of MMI-CP in Section~\ref{sec: method}. Section~\ref{sec: related work} discusses related work, Section~\ref{sec: experiments} presents empirical results, and Section~\ref{sec: discussion} concludes with discussions. All proofs and derivations can be found in Appendix~\ref{sec: proofs}.

\begin{figure}
    \centering
    \begin{subfigure}[t]{0.49\textwidth}
        \centering
        \includegraphics[width=\linewidth]{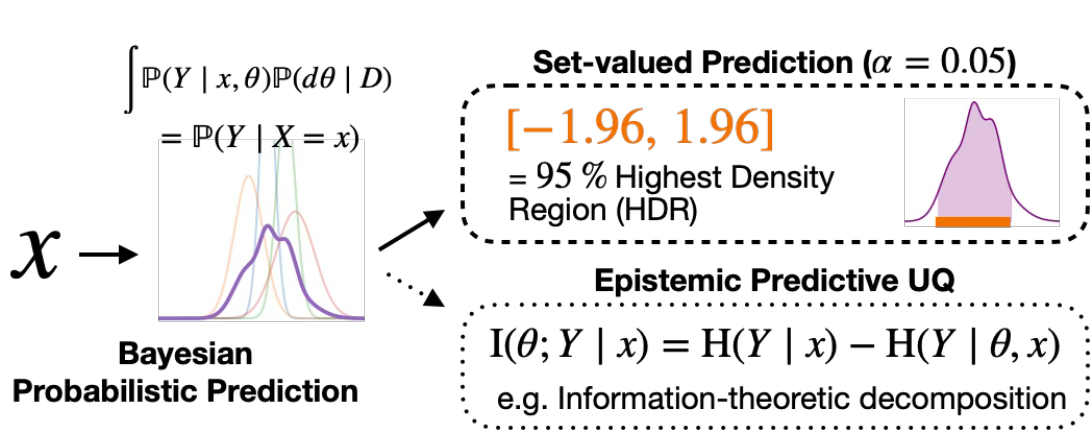}
        \caption{Bayesian probabilistic prediction}
        \label{fig:left}
    \end{subfigure}
    \begin{subfigure}[t]{0.49\textwidth}
        \centering
        \includegraphics[width=\linewidth]{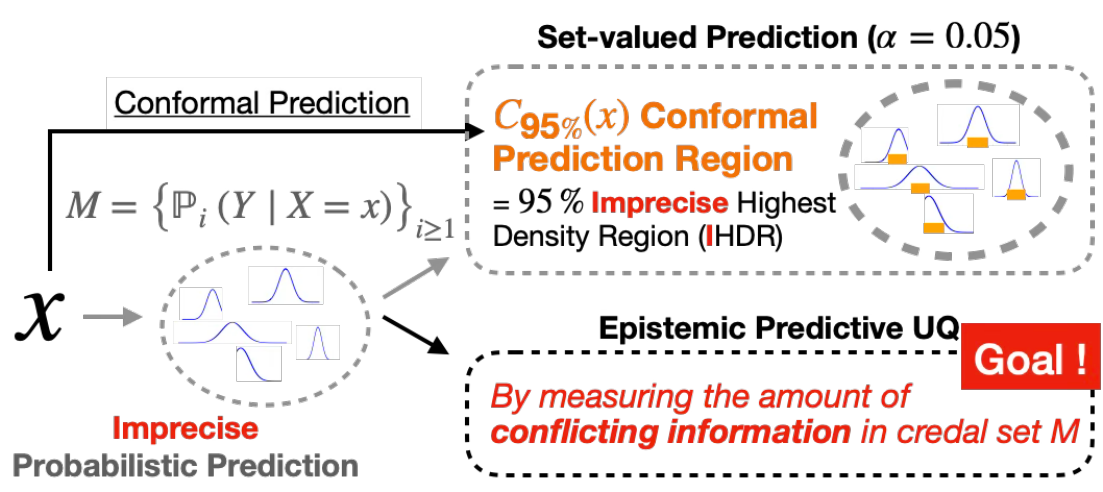}
        \caption{Conformal prediction (our focus)}
        \label{fig:right}
    \end{subfigure}
    \caption{Instead of distribution of plausible models as in Bayesian prediction, conformal prediction implicitly yields a set of plausible models as shown by \citep{cella}. How can we quantify the epistemic predictive uncertainty in this case?}
\end{figure}



\section{Preliminaries}
\label{sec: prelim}

\textbf{Notations.} Let  $\cX, \cY$ denote input and output spaces, respectively, with $\cY =\{y^{(k)}\}_{k=1}^K$ for $K$-class classification and $\cY = \mathbb{R}$ for regression. Uppercase letters $X,Y$ denote random variables on $\cX, \cY$, and lowercase letters $x,y$ their realisations. Let $D=\{(x_i, y_i)\}_{i=1}^n$ be exchangeable samples from our data-generating process. We denote $\cP(\cY)$ the space of all probability measures on our measurable output space ($\cY,\cF_\cY)$.

\subsection{Split Conformal Prediction}
\label{subsec: cp_intro}

We adopt the split CP framework~\citep{papadopoulos2002inductive} in this work. We assume access to dataset $D_\text{cal}$ of size $n_{\text{cal}}$, along with a predictive model $\hat{f}:\cX\to\Delta_{K-1}$~(the $K-1$ dimensional probability simplex) for $K$-class classification and $\hat{f}:\cX\to\RR$ for regression.
Let $s:\cX\times\cY \to \RR$ denote a \emph{non-conformity score} function that quantifies how atypical a candidate example is with respect to the predictive model $\hat{f}$ and the calibration dataset $D_\text{cal}$. Applying $s$ to the calibration set yields the calibration scores $$S_\text{cal} = \{s(x_i, y_i): (x_i,y_i)\in D_{\text{cal}}\}.$$ For simplicity, we write $s_i = s(x_i,y_i)$ below.

Given a user-specified error tolerance $\alpha\in (0,1)$ and a test input $x$, the conformal prediction region (CPR) is given by $$C_{\alpha}(x) := \{y\in\mathcal{Y}: s(x, y)\leq \hat{q}_{1-\alpha}\},$$
where $\hat{q}_{1-\alpha}$ denotes the empirical $(1-\alpha)$-quantile of the calibration scores $S_{\text{cal}}$, that is, $\hat{q}_{1-\alpha} = s_{\sigma(k)}$ where $k = \lceil (1-\alpha)(n_\text{cal}+1)\rceil$ and $\sigma$ denoting a permutation that orders the calibration scores in nondecreasing order. Under the exchangeability assumption in the observations, this construction ensures that, for an unseen data point $(X_{n+1}, Y_{n+1})$, the CPR satisfies  $$\mathbb{P}(Y_{n+1} \in C_{\alpha}(X_{n+1})) \geq 1-\alpha.$$
    

In practice, calibration scores $S_{\text{cal}}$ may contain ties, which can affect the coverage guarantee of CP. A standard remedy is to apply randomised tie-breaking, for example, by defining $\tilde{s}_i = s_i + \epsilon u_i$ where $u_i \sim \mathrm{U}[0,1]$ and $\epsilon>0$ arbitrarily small.
For notational clarity, we assume such a tie-breaking mechanism throughout and write $s_i$ in place of $\tilde{s}_i$.

Equivalently, the CPR can be expressed as $C_\alpha(x) := \{y\in\mathcal{Y}: \pi_{x}(y) > \alpha\}$, where
\begin{align*}
    \pi_{x}(y) = \frac{1 + |i \in\{1,\dots, n_{\text{cal}}\}:s_i \geq s(x, y)\}|}{1+n_\text{cal}} \;.
\end{align*}
The function $\pi_x:\cY\to [0,1]$ is referred to as the \emph{conformal transducer}~\citep{vovk2018conformal} and the value $\pi_x(y)$ is called the conformal p-value. 
It can be interpreted as the p-value associated with testing the null hypothesis $\operatorname{H}_0: D_\text{cal}\cup\{(x,y)\}$ is exchangeable. 
This characterisation, later in Section \ref{sec: method}, allows us to connect CP to imprecise probabilities, which we introduce next.

\subsection{Imprecise probabilities}

Imprecise probabilities~(IP)~\citep{walley,augustin_introduction_2014} generalise classical probability~\citep{an1933sulla} to model higher-order uncertainties through non-linear expectations~\citep{decooman} and \textbf{sets of probability measures}, typically assumed to be convex and closed, known as \textbf{credal sets} $\cM\subseteq\cP(\cY)$. Credal sets are used to represent ambiguity, conflict of evidence~\citep{destercke2010handling}, and ignorance of a learning agent~\citep{williamson2010defence}. 

\paragraph{Credal sets.} Credal sets have recently received growing attention in ML as an uncertainty modelling framework. They enable a clear distinction between aleatoric uncertainty, associated with individual probability measures within the set, and epistemic uncertainty, which is captured by the set as a whole. Common constructions of a credal set $\mathcal{M}$ include taking the convex hull of a collection of probabilistic predictors, such as those arising from deep ensembles~\citep{wang2024credal} or multi-objective optimisation procedures~\citep{caprio_IBNN}, as well as defining $\mathcal{M}$ through sets of empirical distributions, as commonly encountered in multi-source learning settings~\citep{singh2024domain,singhtruthful,chau2024credal}.

\paragraph{Capacities, upper probabilities, and cores.} Beyond convex hull constructions, credal sets can also be generated via probability bounds, such as those induced by capacities and upper probabilities~\citep{choquet1953}.

\begin{definition}[Capacities and Upper probabilities~\citep{cerreia2016ergodic}]
\label{def: capacities}
A set function $\overline{\PP}:\cF_\cY\to[0,1]$ is a capacity if 
\begin{enumerate}
    \item $\overline{\PP}(\emptyset)=0, \overline{\PP}(\cY)=1$, and 
    \item $\overline{\PP}(A)\leq\overline{\PP}(B)$ for $A\subseteq B$ with $A, B\in\cF_\cY$.
\end{enumerate}
Furthermore, if there also exists a compact set $\mathcal{M}
\subseteq\mathcal{P}(\cY)$ where $$\overline{\PP}(A) = \sup_{\mathbb{P}\in \mathcal{M}}\mathbb{P}(A)$$ for all $A\in\cF_\cY$, then $\overline{\PP}$ is also an upper probability. 
\end{definition}

Capacity can be viewed as one of the simplest forms of uncertainty representation: a monotonic set function over events. However, in its full generality, it is often too flexible for practical use, which has motivated the study of capacities with additional structure, such as upper probabilities and standard probability measures.

An upper probability admits a dual representation, known as \emph{lower probability}, defined as $$\underline{\mathbb{P}}(A) = 1 - \overline{\mathbb{P}}(A^c)$$ for $A\in \cF_\cY$. This duality admits an intuitive interpretation: while $\overline{\mathbb{P}}(A)$ encodes direct belief that an event $A$ occurs, its conjugate $\underline{\mathbb{P}}$ encodes belief in $A$ indirectly, by quantifying the degree of disbelief in its complement $A^c$. An upper probability can be constructed by taking the event-wise supremum over a finite collection of probability measures, i.e., $$\overline{\PP}(A) = \sup_{\PP\in\{\PP_1,\dots,\PP_M\}}\PP(A),$$ for all $A\in\cF_\cY$. Under this interpretation, model uncertainty is naturally represented by the set of probability measures set-wise dominated by these upper bounds. This is known as the \emph{core} of $\overline{\PP}$ and forms a credal set. We present the general definition of a core, defined via capacities,



\begin{definition}[Core]
    The core of a capacity $\overline{\PP}$ is given by $$\cM(\overline{\PP}):=\{\PP \in \cP(\cY):\PP(A) \leq \overline{\PP}(A) \text{ for all } A\in \cF_\cY\}.$$
\end{definition}

\paragraph{Quantifying EPU of credal predictors.}
The amount of conflicting probabilistic information of a predictive credal set provides a means of gauging the difficulty of predicting the true label $Y$ for a new input $x$ under model uncertainty~\citep{hofman2024quantifying}. Several uncertainty measures have been proposed in the literature. For example, one may consider the maximum difference in entropy between any distributions in the credal set $\mathcal{M}$, but it has been shown to be suboptimal~\citep{sale2023volume} as they fail to satisfy desirable axioms of uncertainty quantification~\citep{abellan}. Another common approach generalises the classical Hartley measure~\citep{hartley1928transmission} to IP settings. Such measures, however, often incur exponential computational complexity in the number of classes for classification and have not been defined for continuous state space (regression) settings.

    

In light of these limitations, \citet{chau2025integral} recently proposed an alternative based on integral imprecise probability metrics, termed the \emph{Maximum Mean Imprecision}~(MMI). This approach enjoys empirical performance comparable to that of the Generalised Hartley measure, while admitting a linear-time computable upper bound, rendering it tractable for classification problems with a large number of classes. 
\begin{definition}[MMI]
\label{def: mmi}
    Let $\overline{\PP}$ be an upper probability and $\cH$ be a set of test functions $h:\cY\to\RR$. The Maximum Mean Imprecision is given by
    \begin{align*}
        \operatorname{MMI}_{\cH}(\overline{\PP}) =\sup_{h\in\cH}\left|\circint h d\overline{\PP} - \circint h d\underline{\PP}\right|,
    \end{align*}
    where $\circint$ is the Choquet integral~\citep{choquet1953}.
\end{definition}

Further technical details and background on MMI are provided in Appendix~\ref{appendix: ipml}. Building on the duality between upper and lower probabilities, the MMI measures the degree of conflicting information by the largest discrepancy, over a prescribed class of test functions, between optimistic and pessimistic expectations. A standard choice of test function class for classification is the set of indicator functions: $\cH_{\text{TV}} = \{\mathbf{1}_{A}: A\in \cF_\cY\}$, reducing the MMI in total-variation-like metric: $$\operatorname{MMI}_{\cH_{\text{TV}}}(\overline{\PP}) = \sup_{A\in\cF_{\cY}}|\overline{\PP}(A) - \underline{\PP}(A)|.$$

In the next section, we show that, under a mild technical assumption, conformal transducers naturally induce a class of upper probabilities known as plausibility measures~\citep{gert}. The core of the resulting upper probability---representing an implicit form of model uncertainty---can be used to recover the CPR exactly. By adapting the MMI to this core, we obtain a principled quantification of EPU associated with any conformal procedure.

\section{Quantifying Epistemic Predictive Uncertainty in Conformal Prediction}
\label{sec: method}


\subsection{Split CP admits an implicit predictive credal set}
\label{subsec: splitCPIHDR}

Recall from Section~\ref{subsec: cp_intro} that the CPR can be expressed as $C_\alpha(x) = \{y\in\cY: \pi_x(y) > \alpha\}$ where $\pi_x(\cdot)$ is the conformal transducer. We now state our technical assumption, called {\em consonance}~\citep{cella}.

\begin{definition}[Consonance] A conformal transducer $\pi_x(\cdot)$ is consonant if $\sup_{y\in\cY}\pi_x(y) = 1$.
\end{definition}

This condition holds in general for regression, e.g., consider the absolute residual error score $s(x,y) = |\hat{f}(x) - y|$, then $s(x, \hat{f}(x)) = 0$ implies $\pi_{x}(\hat{f}(x))= 1$. However, this condition may fail in classification. Nevertheless, this issue can be circumvented by a simple construction: by stretching the largest conformal p-value to $1$~\citep[Section 7]{cella}. Specifically, sort the labels $y_{\sigma(1)}, y_{\sigma(2)},\dots, y_{\sigma(K)}$ by their conformal p-values in descending order, allowing ties but resolving them in a consistent manner, and set
\begin{align}
\label{eq: consonance}
    \pi_x(y) \leftarrow 
\begin{cases}
    1, & \text{if } y = y_{\sigma(1)}\\
        \pi_x(y), & \text{otherwise. }
\end{cases}
\end{align}
The theoretical and empirical implications of the consonance assumption are discussed in Section~\ref{subsec: cosonance}. In brief, this modification does not affect non-empty prediction sets, retains the marginal coverage guarantee in general, and crucially, it provides a stronger reliability property, known as Type II validity~\citep{cella}. Next, we derive the implicit predictive credal set based on the consonant transducer $\pi_x$.


\begin{restatable}{proposition}{propone}
\label{prop: consonance upper prob}
    Let $\pi_x$ be consonant, define $\overline{\PP}_x$ by $$\overline{\PP}_x(A) = \sup_{y\in A}\pi_x(y)$$ for all $A\in\cF_Y$, with the convention $\overline{\PP}_x(\emptyset):=0$. Then $\overline{\PP}_x$ is an upper probability.
\end{restatable}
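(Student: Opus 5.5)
We will verify the two parts of Definition~\ref{def: capacities}. First we check that $\overline{\PP}_x$ is a capacity, and then we exhibit a compact set $\cM\subseteq\cP(\cY)$ whose upper envelope is $\overline{\PP}_x$. The natural candidate is the core $\cM(\overline{\PP}_x)$.

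\emph{Capacity axioms.} The value $\overline{\PP}_x(\emptyset)=0$ holds by convention. Consonance gives $\overline{\PP}_x(\cY)=\sup_{y\in\cY}\pi_x(y)=1$. Monotonicity holds because a supremum over a larger set can only increase, so $A\subseteq B$ implies $\sup_{A}\pi_x\le\sup_B\pi_x$. Values lie in $[0,1]$ since $\pi_x$ takes values in $[0,1]$. In fact $\overline{\PP}_x$ is a possibility (maxitive) measure: $\overline{\PP}_x(A\cup B)=\max\{\overline{\PP}_x(A),\overline{\PP}_x(B)\}$.

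\emph{Upper envelope of the core.} We need, for every $A$, some $\PP\in\cM(\overline{\PP}_x)$ with $\PP(A)$ equal to, or approaching, $\overline{\PP}_x(A)$. We do this by an explicit construction.

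In the classification case, order the labels so that $1=\pi_x(y_{\sigma(1)})\ge\pi_x(y_{\sigma(2)})\ge\dots$. This is possible thanks to the consonance modification in \eqref{eq: consonance}. Fix $A$ and let $y^\ast\in A$ attain $\sup_A\pi_x$. Build $\PP$ by the standard possibility-theory recipe. Choose a permutation $\tau$ in which $y^\ast$ comes immediately after all labels with strictly larger $\pi_x$, and assign masses $\PP(\{y_{\tau(j)}\})=\pi_x(y_{\tau(j)})-\pi_x(y_{\tau(j+1)})$. This is the vertex of the core associated with the chain of nested level sets $\{\pi_x\ge t\}$.

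One checks two things. First, $\PP(B)\le\max_{y\in B}\pi_x(y)$ for all $B$: the mass of $B$ is bounded by the telescoping sum starting at its highest-ranked element. Second, $\PP(A)\ge \PP(\{\pi_x\ge \pi_x(y^\ast)\}\cap\text{tail})$. More directly, since $y^\ast$ is the highest-ranked element of $A$ under $\tau$, we may instead take the alternative vertex that places all the mass $\pi_x(y^\ast)$ of the tail on $A$. A cleaner device is to take the random-set representation: let $U\sim\mathrm U[0,1]$ and the focal set $\Gamma_U=\{y:\pi_x(y)\ge U\}$, which is nonempty by consonance. Any measurable selection $\PP$ of the random set, i.e.\ the law of some $Y\in\Gamma_U$, satisfies
\[
\PP(B)\le \Pr(\Gamma_U\cap B\ne\emptyset)=\Pr\bigl(U\le \sup_B\pi_x\bigr)=\overline{\PP}_x(B),
\]
up to sup-attainment. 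For the given $A$, choose the selection that picks a point of $A$ whenever $\Gamma_U\cap A\neq\emptyset$. This gives $\PP(A)=\overline{\PP}_x(A)$.

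This selection argument works verbatim for regression. The only delicate point there is attainment of suprema and measurability of the selection. For this we will use that $\pi_x$ is a step function taking finitely many values $\{j/(n_{\text{cal}}+1)\}$, so each level set $\{\pi_x\ge t\}$ is a finite union of preimages. For the natural scores these preimages are intervals, and a measurable selector exists by taking, e.g., a fixed point in each of finitely many pieces. The supremum over $A$ is then a maximum over finitely many values, so it is attained whenever it is achieved within the relevant level set.

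Finally, compactness of $\cM(\overline{\PP}_x)$. It is defined by the closed constraints $\PP(A)\le\overline{\PP}_x(A)$, so it is convex and closed. In the finite case it is a closed subset of the simplex. For $\cY=\RR$ we will show tightness: $\{\pi_x\ge 1/(n_{\text{cal}}+1)\}^c$ has upper probability below $1/(n_{\text{cal}}+1)$, and more usefully the outermost level set $\{\pi_x>1/(n_{\text{cal}}+1)\}$ is bounded for continuous scores. Prokhorov's theorem then gives weak compactness.

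I expect the main obstacle to be this continuous case: controlling tightness and the exact attainment $\sup_{\PP\in\cM}\PP(A)=\overline{\PP}_x(A)$ for arbitrary Borel $A$. If needed, I would fall back on \citet[Theorem 1]{martin2025efficientmontecarlomethod} and the full-CP result of \citet{cella}, whose argument uses only consonance and the finite range of $\pi_x$, both of which hold in the split setting.
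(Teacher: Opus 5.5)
Your capacity checks match the paper's. Your random-set selection argument for $\overline{\PP}_x(A)=\sup_{\PP\in\cM(\overline{\PP}_x)}\PP(A)$ is correct, and it supplies what the paper only asserts as ``easy to see''. Two small points:
\begin{itemize}
\item Drop the first vertex construction, because it does not attain the envelope. On $\{a,b,c\}$ with $\pi_x=(1,0.5,0.2)$ and $A=\{b\}$ it gives $\PP(A)=0.3<0.5$.
\item The measurability discussion is unnecessary. Since $\pi_x$ takes values in the finite set $\{j/(n_{\text{cal}}+1)\}$, every supremum is a maximum and the selection you need is a finite mixture of Dirac masses. No interval structure of the level sets is required.
\end{itemize}

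The genuine gap is compactness for $\cY=\RR$: the core is not tight, so your Prokhorov step fails. Since $\pi_x(y)\ge 1/(n_{\text{cal}}+1)$ for every $y$ (the numerator is at least $1$), the set $\{\pi_x\ge 1/(n_{\text{cal}}+1)\}^c$ is empty. The bounded outermost level set only yields $\PP(\{\pi_x>1/(n_{\text{cal}}+1)\}^c)\le 1/(n_{\text{cal}}+1)$, never an arbitrarily small bound. Concretely, if $\pi_x(y_0)=1$, then $\frac{n_{\text{cal}}}{n_{\text{cal}}+1}\delta_{y_0}+\frac{1}{n_{\text{cal}}+1}\delta_z\in\cM(\overline{\PP}_x)$ for every $z\in\RR$, so mass $1/(n_{\text{cal}}+1)$ escapes to infinity.

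Choosing a smaller representing set does not help either. Suppose some weakly compact, hence tight, $\cM\subseteq\cP(\RR)$ had upper envelope $\overline{\PP}_x$. Then a compact $K$ with $\sup_{\PP\in\cM}\PP(K^c)<1/(n_{\text{cal}}+1)$ would exist, contradicting $\overline{\PP}_x(K^c)\ge 1/(n_{\text{cal}}+1)$. So for countably additive measures with the weak topology, the regression case of the statement is false, and your fallback citations cannot repair it. Separately, even weak closedness of the core is not automatic, since $\PP\mapsto\PP(A)$ is not weakly continuous for general Borel $A$.

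The paper disposes of compactness by citing Proposition 3 of Marinacci. Results of that type regard the core as a set of finitely additive probabilities under the weak$^\ast$ topology. There the core is the intersection of the unit ball of bounded charges with the weak$^\ast$-closed sets $\{P:P(\cY)=1,\ P(A)\le\overline{\PP}_x(A)\}$, hence compact by Banach--Alaoglu. To finish, you have two options:
\begin{itemize}
\item Work in that framework. Your countably additive selections lie in the larger core and still attain the envelope.
\item Restrict the claim to finite $\cY$, where your simplex argument is complete.
\end{itemize}
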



Proposition~\ref{prop: consonance upper prob} shows that a consonant transducer induces an upper probability measure $\overline{\mathbb{P}}_x$ by assigning each event the maximum conformal p-value it contains. In IP, an upper probability with this special maxitive structure is known as a \emph{plausibility measure}~\citep{friedman1995plausibility}. We now show that the core $\mathcal{M}(\overline{\mathbb{P}_x})$ admits a natural interpretation as the implicit predictive credal set underlying the corresponding CP procedure. To make this connection precise, we first recall the notion of a highest density region in the context of IP, which is usually introduced through lower probabilities.

\begin{definition}[Imprecise Highest Density Region~\citep{coolen}]
\label{def: IHDR}
    Given a lower probability $\underline{\PP}$ and $\alpha \in [0,1]$, the set $\operatorname{IR}_\alpha\subseteq\cY$ is a $(1-\alpha)$-\emph{Imprecise Highest Density Region} (IHDR) if $$\underline{\PP}(Y\in\operatorname{IR}_\alpha) = 1-\alpha$$ and the size  $\int_{\operatorname{IR}_\alpha} dy$ is minimum. If $\cY$ is at most countable, replace $\int_{\operatorname{IR}_\alpha}dy$ with $|\operatorname{IR}_\alpha|$.
\end{definition}
An immediate consequence of this definition is that $$\PP(Y\in\operatorname{IR}_{\alpha}) \geq 1-\alpha$$ for all $\mathbb{P}\in\cM(\overline{\PP})$, 
Now we can state our first result connecting CPR with IHDRs for split CP, similar to \citet[Proposition 5]{caprio2025conformal} for full CP.

\begin{restatable}{proposition}{proptwo}
    \label{prop: IHDR_equals_CPR}
    Denote the $(1-\alpha)$-IHDR of $\overline{\PP_x}$ as $\operatorname{IR}_\alpha^\cM$. Then, for any $\alpha\in[0,1]$, we have  $$\operatorname{IR}_\alpha^\cM = C_\alpha(x). $$
\end{restatable}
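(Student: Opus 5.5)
The plan is to compute the lower probability induced by the consonant plausibility measure of Proposition~\ref{prop: consonance upper prob} explicitly, and then read off the IHDR from its level-set structure. By duality,
\begin{align*}
\underline{\PP}_x(A) = 1 - \overline{\PP}_x(A^c) = 1 - \sup_{y\notin A}\pi_x(y),
\end{align*}
with the convention $\sup\emptyset = 0$. So the lower probability of a set depends only on the largest conformal p-value outside it. The key observation is that $\underline{\PP}_x(A)\geq 1-\alpha$ holds if and only if $\pi_x(y)\leq\alpha$ for every $y\notin A$. Equivalently, $A\supseteq\{y:\pi_x(y)>\alpha\}=C_\alpha(x)$. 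Hence $C_\alpha(x)$ is contained in every set whose lower probability reaches $1-\alpha$, and it reaches this level itself. This gives minimality in size (cardinality or Lebesgue measure) immediately, since any competing region contains $C_\alpha(x)$.

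The next step is to check the exact equality $\underline{\PP}_x(C_\alpha(x))=1-\alpha$ required by Definition~\ref{def: IHDR}. We have $\underline{\PP}_x(C_\alpha(x)) = 1-\sup_{y:\pi_x(y)\leq\alpha}\pi_x(y)$. In split CP, the transducer takes values only in the grid $\{j/(n_{\text{cal}}+1)\}$, with value $1$ attained by consonance. So this supremum is the largest attained p-value not exceeding $\alpha$, and it generally equals $\alpha$ only when $\alpha$ lies on the grid and is attained. I expect this to be the main obstacle.

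I would handle it the way \citet[Proposition 5]{caprio2025conformal} does for full CP. One option is to interpret the IHDR condition as ``$\underline{\PP}_x(\operatorname{IR}_\alpha)\geq 1-\alpha$ with minimal size,'' which is the property actually used afterwards: $\PP(Y\in \operatorname{IR}_\alpha)\geq 1-\alpha$ for all $\PP$ in the core. The other is to restrict to $\alpha$ values at which the attained p-values make the equality exact, noting that between consecutive grid points both $C_\alpha(x)$ and the minimal region are piecewise constant in $\alpha$. Under either reading, the argument above shows that the minimal region is exactly $C_\alpha(x)$. Uniqueness also follows, since any other region of equal size containing $C_\alpha(x)$ differs from it only by a null set (or, in the countable case, not at all). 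This closes the argument.

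Two edge cases need checking. For $\alpha=0$, consonance and $\pi_x>0$ give $C_0(x)=\cY$ and $\underline{\PP}_x(\cY)=1$. For $\alpha=1$, $C_1(x)=\emptyset$ and $\underline{\PP}_x(\emptyset)=1-\overline{\PP}_x(\cY)=0$, which uses consonance. The split-CP specifics, namely the calibration-based formula for $\pi_x$ and the tie-breaking, enter only through the facts that $\pi_x$ is a well-defined function on $\cY$ and is consonant.
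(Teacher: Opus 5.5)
Your proposal is correct, and it takes a different and more elementary route than the paper.

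\textbf{The paper's route.} The paper argues via clouds. It sets $\gamma=\pi_x$ where $\pi_x\le 0.5$ and $\gamma=1-\pi_x$ otherwise, and uses consonance to get a point where $\gamma=0$, so that $[\gamma,\pi_x]$ is a cloud. Neumaier's cloud constraint then gives $\PP(Y\in C_\alpha(x))\ge 1-\alpha$ for every $\PP$ in the core, hence $\underline{\PP}_x(C_\alpha(x))\ge 1-\alpha$. From there it deduces $\operatorname{IR}^\cM_\alpha\subseteq C_\alpha(x)$ from minimality, and asserts ``by the definition of CPR'' that the inclusion cannot be strict.

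\textbf{What your route adds.} Your formula $\underline{\PP}_x(A)=1-\sup_{y\notin A}\pi_x(y)$ and the equivalence $\underline{\PP}_x(A)\ge 1-\alpha \iff A\supseteq C_\alpha(x)$ make the cloud machinery unnecessary. More importantly, they supply the step the paper leaves unargued: every admissible region contains $C_\alpha(x)$. Minimal size then forces equality, up to a null set in the Lebesgue case. A minimal-size argument on its own only compares sizes and does not yield an inclusion.

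\textbf{The exact-equality issue.} You are also right that the exact-equality requirement in Definition~\ref{def: IHDR} is a genuine issue. The paper's proof passes over it by simply writing $\underline{\PP}_x(\operatorname{IR}^\cM_\alpha)=1-\alpha$. Since $\underline{\PP}_x$ only takes values in $\{1-\pi_x(y):y\in\cY\}\cup\{1\}$, a region with lower probability exactly $1-\alpha$ exists only for $\alpha\in\pi_x(\cY)\cup\{0\}$. Your reading (a), ``lower probability at least $1-\alpha$ with minimal size'', is the one under which the proposition holds for all $\alpha\in[0,1]$, and it is effectively what the paper's proof uses. Your reading (b) is the most that can be claimed under the literal definition.

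\textbf{Trade-off.} The paper's route ties the result to the cloud literature and to the full-CP argument it adapts, with the bound stated for each element of the core. Yours is short and self-contained, establishes the needed inclusion rigorously, and makes the exact scope of the statement explicit.
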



Proposition~\ref{prop: IHDR_equals_CPR} illustrates that, under the consonance assumption,
any split conformal prediction procedure (also) admits an implicit predictive credal set that represents its model uncertainty. To quantify this uncertainty, we adopt the general MMI framework (Definition~\ref{def: mmi}) to characterise the amount of conflicting information in $\mathcal{M}(\overline{\mathbb{P}_x})$ via the associated upper probability $\overline{\mathbb{P}_x}$.

\subsection{Quantifying EPU with MMI-CP}
\label{subsec: MMI-CP}

While \citet{chau2025integral} introduced MMI for general upper probabilities, the ones considered here are plausibility measures with a maxitive structure, \(\overline{\PP_x}(A)=\sup_{y\in A}\pi_x(y)\), which enables a substantial computational simplification: exact MMI can be computed efficiently, whereas in the general case it requires exponential time.



\begin{restatable}{proposition}{propthree}
\label{prop: mmi_tv}
    Let $\cH_{\text{TV}} = \{\mathbf{1}_A: A\in\cF_\cY\}$. Then $$\operatorname{MMI}_{\cH_{\text{TV}}}(\overline{\PP_x}) = \sup_{A\in\cF_\cY}|\overline{\PP_x}(A) - \underline{\PP_x}(A)| = \pi_{\sigma({2})},$$ where $\pi_{\sigma({2})}$ denotes the second largest conformal p-value.    
\end{restatable}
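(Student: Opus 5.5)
We argue directly from the maxitive structure of $\overline{\PP_x}$ given by Proposition~\ref{prop: consonance upper prob}. Sort the labels so that $\pi_{\sigma(1)}=1\geq\pi_{\sigma(2)}\geq\dots\geq\pi_{\sigma(K)}$, where $\pi_{\sigma(k)}:=\pi_x(y_{\sigma(k)})$; consonance, possibly enforced via \eqref{eq: consonance}, guarantees $\pi_{\sigma(1)}=1$. For any event $A$, the Choquet integrals of $\mathbf{1}_A$ with respect to $\overline{\PP_x}$ and $\underline{\PP_x}$ reduce to $\overline{\PP_x}(A)$ and $\underline{\PP_x}(A)$. The first equality in the statement is therefore immediate from Definition~\ref{def: mmi}. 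Since $\overline{\PP_x}$ is an upper probability, $\overline{\PP_x}(A)\geq\underline{\PP_x}(A)$, so the absolute value can be dropped. We then compute
\[
\overline{\PP_x}(A)-\underline{\PP_x}(A)=\sup_{y\in A}\pi_x(y)+\sup_{y\in A^c}\pi_x(y)-1 .
\]
For the regression case we note that consonance only gives $\sup_y \pi_x(y)=1$, and that the p-values take finitely many values, so the suprema are attained.

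The key step is a case split on whether $A$ contains the top label $y_{\sigma(1)}$; by symmetry of the expression in $A$ and $A^c$, we may assume it does. In that case $\sup_{y\in A}\pi_x(y)=1$, so the difference equals $\sup_{y\in A^c}\pi_x(y)$. Every element of $A^c$ differs from $y_{\sigma(1)}$, so this is at most $\pi_{\sigma(2)}$. This gives the upper bound $\sup_A\leq\pi_{\sigma(2)}$. The boundary cases $A=\emptyset$ and $A=\cY$ are covered by the convention $\sup_\emptyset:=0$, which yields a difference of $0$.

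For attainment we take $A=\{y_{\sigma(1)}\}$. Then $\overline{\PP_x}(A)=1$ and $\underline{\PP_x}(A)=1-\sup_{y\neq y_{\sigma(1)}}\pi_x(y)=1-\pi_{\sigma(2)}$, so the difference is exactly $\pi_{\sigma(2)}$.

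The main obstacle is the regression setting, where "second largest p-value" must be read as $\sup_{y\neq y^*}\pi_x(y)$ for a chosen maximiser $y^*$. Because $\pi_x$ is a step function of the score, this supremum typically equals $1$ whenever a neighbourhood of the maximiser also has p-value $1$. The identity still holds under this interpretation, but we would state the interpretation explicitly and mainly present the classification case. We also need $\{y_{\sigma(1)}\}$ to be measurable, which holds for singletons in both the discrete and Borel settings.
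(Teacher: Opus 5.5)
Your proof is correct and follows the same route as the paper: rewrite $\underline{\PP_x}(A)=1-\overline{\PP_x}(A^c)$ and evaluate at $A=\{y_{\sigma(1)}\}$, the singleton carrying p-value $1$. Your case split on whether $A$ contains $y_{\sigma(1)}$ proves the upper bound $\sup_{A}\bigl(\overline{\PP_x}(A)-\underline{\PP_x}(A)\bigr)\le\pi_{\sigma(2)}$, which the paper only asserts without justification, and your reading of $\pi_{\sigma(2)}$ in regression as $\sup_{y\neq y^*}\pi_x(y)$ (typically equal to $1$) is a worthwhile clarification.
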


In classification, computing $\operatorname{MMI}_{\cH_{\text{TV}}}(\overline{\PP_x})$ takes only $\mathcal{O}(K)$ time, compared to $\mathcal{O}(2^K)$ for general upper probability. An analogous reduction holds for regression, yielding $\mathcal{O}(n_{\text{cal}})$ complexity due to sorting. Proposition~\ref{prop: mmi_tv} admits a simple intuition. Under consonance, all prediction instances share the same largest conformal p-value, equal to $1$, which is therefore non-informative. What distinguishes confidence in predictions is the distribution, or portfolio, of the remaining p-values. For instance, in Figure~\ref{fig: p_val_dist_example}, although both instances yield prediction sets of identical size, the left instance exhibits uniformly smaller p-values than the right, indicating stronger evidence as to what labels to include or not. From this perspective, the second-largest p-value $\pi_{\sigma(2)}$, the maximum among all remaining p-values, serves as one concise summary of overall confidence.

However, while $\pi_{\sigma(2)}$ upper bounds the nontrivial p-values, it is also natural to consider measures that account for the entire p-value portfolio. This motivates us to propose a new characterisation of MMI for plausibility measures and CP, using the conformal transducer $\pi_x$ as the test function.
\begin{restatable}{proposition}{propfour}
    \label{prop: mmi_pi}
    Let $\pi_x$ be consonant and $\overline{\PP_x}$ the induced plausibility measure. Then,
    \begin{align}
    \label{eq: mmi_plausibility}
        \operatorname{MMI}_{\{\pi_x\}}(\overline{\PP_x}) 
        &= \int_{0}^1 \sup_{y\not\in C_{\alpha}(x)} \pi_x(y) \;d\alpha \\
        &= \int_0^1 (1+n_{\text{cal}})^{-1}(1 + |B_\alpha|){}\; d\alpha,
    \end{align}
    where $B_\alpha = \{ S_i \in S_{\text{cal}}: S_i \geq \inf_{y\not\in C_{\alpha}(x)}s(x,y)\}$.
\end{restatable}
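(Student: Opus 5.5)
We compute the Choquet integrals of the single test function $h=\pi_x$ directly through their layer-cake representations. Since $\pi_x\ge 0$ takes values in $[0,1]$, for any capacity $\nu$ we have
\[
\circint \pi_x\, d\nu = \int_0^1 \nu(\{y:\pi_x(y)>\alpha\})\,d\alpha = \int_0^1 \nu(C_\alpha(x))\,d\alpha .
\]
Here we use the identity $C_\alpha(x)=\{y:\pi_x(y)>\alpha\}$ from Section~\ref{subsec: cp_intro}. The consonance modification in \eqref{eq: consonance} only raises the top p-value to $1$, so for $\alpha<1$ it does not change any non-empty superlevel set. We adopt this identity as the definition of $C_\alpha(x)$ for the consonant transducer. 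Applying it with $\nu=\overline{\PP_x}$ and $\nu=\underline{\PP_x}$, the MMI becomes
\[
\int_0^1 \bigl(\overline{\PP_x}(C_\alpha(x))-\underline{\PP_x}(C_\alpha(x))\bigr)\,d\alpha .
\]
The absolute value can be dropped because $\underline{\PP_x}\le\overline{\PP_x}$ on every event. This follows from Proposition~\ref{prop: consonance upper prob}: the core is non-empty, and any $\PP$ in the core satisfies $\underline{\PP_x}(A)\le\PP(A)\le\overline{\PP_x}(A)$.

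Next we evaluate the two set functions on $C_\alpha(x)$. By consonance, the label $y_{\sigma(1)}$ has $\pi_x=1>\alpha$ for every $\alpha<1$, so it lies in $C_\alpha(x)$ and hence $\overline{\PP_x}(C_\alpha(x))=1$. By duality, $\underline{\PP_x}(C_\alpha(x))=1-\overline{\PP_x}(C_\alpha(x)^c)=1-\sup_{y\notin C_\alpha(x)}\pi_x(y)$, with the convention that the supremum over the empty set is $0$. Subtracting gives the integrand $\sup_{y\notin C_\alpha(x)}\pi_x(y)$, which is the first equality. The endpoint $\alpha=1$ has measure zero and can be ignored.

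For the second equality, we rewrite the integrand in terms of calibration scores. The map $s\mapsto(1+|\{i:s_i\ge s\}|)/(1+n_{\text{cal}})$ is non-increasing, and $\pi_x(y)$ is this map evaluated at $s(x,y)$. Therefore the supremum of $\pi_x$ over $y\notin C_\alpha(x)$ is attained by taking the infimum of $s(x,y)$ over the same set. This yields $(1+|B_\alpha|)/(1+n_{\text{cal}})$.

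The main obstacle is justifying the exchange of $\sup$ and $\inf$ when the infimum is not attained, which matters in regression where $\cY=\RR$. Here the counting function is only left-continuous in the appropriate sense. We would handle this by noting that the score set $\{s(x,y):y\notin C_\alpha(x)\}$ is bounded below by $\hat q$-type thresholds, and that the step function $s\mapsto|\{i:s_i\ge s\}|$ satisfies $\sup_{s>t}|\{i:s_i\ge s\}|=|\{i:s_i>t\}|$. Randomised tie-breaking ensures no calibration score coincides with the relevant infimum almost surely, so $|\{i:s_i>t\}|=|\{i:s_i\ge t\}|$ and the expression for $|B_\alpha|$ follows. A secondary point is the empty-complement case ($C_\alpha(x)=\cY$). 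There the integrand should be $0$, whereas the formula gives $1/(1+n_{\text{cal}})$ with the convention $\inf\emptyset=+\infty$. We would either restrict to $\alpha$ where the complement is non-empty or note that the two expressions agree under the stated conventions.
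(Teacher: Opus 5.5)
Your proof of the first equality is correct and follows the paper's argument: the layer-cake form of the Choquet integral, $\overline{\PP_x}(C_\alpha(x))=1$ by consonance, and conjugacy for $\underline{\PP_x}$. You also handle the $>$ versus $\ge$ bookkeeping more cleanly than the paper does.

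The gap is in the second equality, at exactly the step you flag, and your proposed fix fails. Randomised tie-breaking only separates calibration scores from each other. It cannot keep the relevant infimum away from them, because that infimum is a calibration score by construction. Since $C_\alpha(x)=\{y:s(x,y)\le\hat q_{1-\alpha}\}$, a continuous score such as $|y-\hat f(x)|$ gives $t_\alpha:=\inf_{y\notin C_\alpha(x)}s(x,y)=\hat q_{1-\alpha}$, and this infimum is not attained. That is precisely the hypothesis of Proposition~\ref{prop: regression_mmi_ch}. Your own (correct) identity $\sup_{s>t}|\{i:s_i\ge s\}|=|\{i:s_i>t\}|$ then gives $\sup_{y\notin C_\alpha(x)}\pi_x(y)=|B_\alpha|/(1+n_{\text{cal}})$. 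This is one step of $1/(1+n_{\text{cal}})$ below the claimed integrand. The empty-complement case is not resolved by conventions either. With $\inf\emptyset=+\infty$ you get $B_\alpha=\emptyset$ and integrand $1/(1+n_{\text{cal}})\neq 0$, and discarding those $\alpha$ changes the integral.

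These are defects of the statement, not only of your argument. Because $y\notin C_\alpha(x)$ iff $\pi_x(y)\le\alpha$, the first integrand never exceeds $\alpha$, whereas $(1+|B_\alpha|)/(1+n_{\text{cal}})\ge 1/(1+n_{\text{cal}})$. Concretely, take $K=2$, $n_{\text{cal}}=1$, $s_1=0.5$, $s(x,y_1)=0.2$, $s(x,y_2)=0.8$, so $\pi_x=(1,\tfrac12)$ is already consonant. The first expression, like Proposition~\ref{prop: k_class_quantifiers}, gives $1/4$, but the second gives $1/2$. The paper's proof performs the same $\sup$/$\inf$ interchange without comment, so it has the same hole. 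As a consequence, the constant in Proposition~\ref{prop: regression_mmi_ch} is too large by $1/(1+n_{\text{cal}})$, though it remains instance-independent.

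What you can actually prove is the first equality in full, together with the pointwise identity $\sup_{y\notin C_\alpha(x)}\pi_x(y)=(1+|B_\alpha|)/(1+n_{\text{cal}})$. This holds for those $\alpha$ at which $\cY\setminus C_\alpha(x)$ is non-empty and the infimum of $s(x,\cdot)$ over it is attained. In classification that means all $\alpha\ge\min_y\pi_x(y)$; there you also need that the stretched label $y_{\sigma(1)}$ never lies in the complement, so the counting formula for $\pi_x$ applies. Elsewhere the correct integrand is $(1+|\{i:s_i>t_\alpha\}|)/(1+n_{\text{cal}})$ when the infimum is unattained, and $0$ when the complement is empty.
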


Essentially, $\operatorname{MMI}_{\{\pi_x\}}$ aggregates, across all confidence levels, the largest conformal p-values among labels excluded from the CPR. This perspective highlights that, as one would expect in split CP, the EPU quantified is intrinsically shaped by both the calibration set and the chosen nonconformity score. Next, we show that these quantification functions admit closed-form analytical expressions, leading to efficient and practically implementable solutions.
        

\subsubsection{EPU in Conformal Classification}


In conformal classification, $\operatorname{MMI}_{\{\pi_x\}}$ can be expressed as:

\begin{restatable}{proposition}{propfive}
\label{prop: k_class_quantifiers}
     For $K$-class classification, 
     \begin{align*}
    \operatorname{MMI}_{\{\pi_x\}}(\overline{\PP_x}) &= \sum_{k=2}^{K+1} (\pi_{\sigma(k-1)} - \pi_{\sigma(k)})\cdot \pi_{\sigma(k)}, 
     \end{align*}
where $\pi_{\sigma(1)}\geq \pi_{\sigma(2)}\dots\geq \pi_{\sigma(K)}$ are conformal p-values arranged in descending order, and $\pi_{(K+1)} := 0$.    
\end{restatable}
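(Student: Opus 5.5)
Our plan is to start from the first representation in Proposition~\ref{prop: mmi_pi}, namely $\operatorname{MMI}_{\{\pi_x\}}(\overline{\PP_x}) = \int_0^1 g(\alpha)\,d\alpha$ with $g(\alpha) := \sup_{y\notin C_\alpha(x)}\pi_x(y)$, and to evaluate this integral directly in the finite-label case. We use the convention that the supremum over the empty set is $0$; since $\pi_x(y_{\sigma(1)})=1$ by consonance, the excluded set is empty only when $\alpha<\pi_{\sigma(K)}$, and there the integrand vanishes. This matches the term with $\pi_{\sigma(K+1)}:=0$.

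\textbf{Step 1: $g$ is a step function.} Recall $C_\alpha(x)=\{y:\pi_x(y)>\alpha\}$. Sort the p-values as $1=\pi_{\sigma(1)}\geq\pi_{\sigma(2)}\geq\dots\geq\pi_{\sigma(K)}\geq\pi_{\sigma(K+1)}:=0$. For $\alpha\in[\pi_{\sigma(k)},\pi_{\sigma(k-1)})$ with $2\le k\le K+1$, the labels $y_{\sigma(1)},\dots,y_{\sigma(k-1)}$ have $\pi_x>\alpha$ and are included, since $\pi_{\sigma(j)}\ge\pi_{\sigma(k-1)}>\alpha$. The labels $y_{\sigma(k)},\dots,y_{\sigma(K)}$ have $\pi_x\le\alpha$ and are excluded. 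Hence on this interval the excluded set is $\{y_{\sigma(k)},\dots,y_{\sigma(K)}\}$, whose largest p-value is $\pi_{\sigma(k)}$, so $g(\alpha)=\pi_{\sigma(k)}$. When $k=K+1$ the excluded set is empty and $g=0=\pi_{\sigma(K+1)}$, consistent with the convention above.

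\textbf{Step 2: integrate.} The intervals $[\pi_{\sigma(k)},\pi_{\sigma(k-1)})$ for $k=2,\dots,K+1$ partition $[0,1)$, because $\pi_{\sigma(1)}=1$ and $\pi_{\sigma(K+1)}=0$. The single point $\alpha=1$ has measure zero. Therefore
\begin{align*}
\int_0^1 g(\alpha)\,d\alpha=\sum_{k=2}^{K+1}\bigl(\pi_{\sigma(k-1)}-\pi_{\sigma(k)}\bigr)\,\pi_{\sigma(k)},
\end{align*}
which is the claimed formula.

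\textbf{Main obstacle.} The only delicate point is ties among p-values. If $\pi_{\sigma(k-1)}=\pi_{\sigma(k)}$, the corresponding interval is empty and its term in the sum is zero, so the formula is unaffected. We should still check that on each nondegenerate interval the supremum over excluded labels is the value $\pi_{\sigma(k)}$ rather than some larger tied value. This holds because any label tied with $\pi_{\sigma(k-1)}$ has p-value strictly above $\alpha$ and is therefore included. The boundary convention at $\alpha$ (strict versus non-strict inequality) changes $g$ only on a finite set of points, so it does not affect the integral.
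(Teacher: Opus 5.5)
Your proof is correct and follows essentially the same route as the paper. Both start from the representation $\int_0^1 \sup_{y\notin C_\alpha(x)}\pi_x(y)\,d\alpha$ of Proposition~\ref{prop: mmi_pi}, note that the integrand is a step function equal to $\pi_{\sigma(k)}$ on $[\pi_{\sigma(k)},\pi_{\sigma(k-1)})$, and sum length times height. Your version makes the paper's informal ``sweeping $\alpha$'' argument explicit, including the empty-set convention, the use of $\pi_{\sigma(1)}=1$ to partition $[0,1)$, ties, and the measure-zero boundary points.
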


Interestingly, instead of simply summing up the p-values, $\operatorname{MMI}_{\{\pi_x\}}$ explicitly accounts for the discrete “gradients” of the p-value profile, namely the consecutive differences $\pi_{\sigma(k-1)} - \pi_{\sigma(k)}$. This aligns well with intuition: a large gap between consecutive sorted p-values indicates a highly skewed p-value distribution, in which a few classes receive substantially stronger support than the others. In such situations, the conformal procedure can more easily discriminate between plausible and implausible labels, making the construction of the conformal prediction set more decisive. 

\subsubsection{EPU in Conformal Regression}
\label{subsubsec: regression}
For conformal regression, $\operatorname{MMI}_{\{\pi_x\}}$ exhibits a different phenomenon than in classification.


\begin{restatable}{proposition}{propsix}
    \label{prop: regression_mmi_ch}
    In conformal regression, for score functions satisfying $\inf_{y\not\in C_{\alpha}(x)} s(x,y) = \hat{q}_{1-\alpha}$,
    $$\operatorname{MMI}_{\{\pi_x\}}(\overline{\PP_x}) = 1 + \int_0^1 \frac{1-\lceil (n_\text{cal}+1)(1-\alpha)\rceil}{n_\text{cal}+1}\; d\alpha.$$ 
\end{restatable}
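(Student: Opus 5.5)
We start from the second expression in Proposition~\ref{prop: mmi_pi},
\begin{align*}
\operatorname{MMI}_{\{\pi_x\}}(\overline{\PP_x}) = \int_0^1 (1+n_{\text{cal}})^{-1}\bigl(1+|B_\alpha|\bigr)\, d\alpha,
\end{align*}
where $B_\alpha = \{s_i \in S_{\text{cal}} : s_i \geq \inf_{y\notin C_\alpha(x)} s(x,y)\}$. The assumption on the score lets us replace the infimum by $\hat{q}_{1-\alpha}$. Then $B_\alpha$ is the set of calibration scores that are at least the empirical $(1-\alpha)$-quantile. The whole proof therefore reduces to counting $|B_\alpha|$ for each $\alpha$ and integrating.

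\textbf{Counting $|B_\alpha|$.} By the tie-breaking convention the calibration scores are almost surely distinct. Set $k_\alpha = \lceil (1-\alpha)(n_{\text{cal}}+1)\rceil$, so that $\hat{q}_{1-\alpha} = s_{\sigma(k_\alpha)}$. The scores that are $\geq s_{\sigma(k_\alpha)}$ are exactly $s_{\sigma(k_\alpha)},\dots,s_{\sigma(n_{\text{cal}})}$, which gives
\begin{align*}
|B_\alpha| = n_{\text{cal}} - k_\alpha + 1.
\end{align*}
Substituting, the integrand becomes
\begin{align*}
\frac{1 + n_{\text{cal}} - k_\alpha + 1}{n_{\text{cal}}+1} = 1 + \frac{1-k_\alpha}{n_{\text{cal}}+1}.
\end{align*}
Integrating over $[0,1]$ then gives exactly the claimed formula.

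\textbf{Main obstacle: boundary values of $\alpha$.} For small $\alpha$ we can have $k_\alpha > n_{\text{cal}}$. In that regime $\hat{q}_{1-\alpha}$ is conventionally $+\infty$ and $C_\alpha(x)=\cY$, so $\{y\notin C_\alpha(x)\}$ is empty. Here I would use the conventions $\sup\emptyset = 0$ in the first line of Proposition~\ref{prop: mmi_pi} and $\inf\emptyset=+\infty$, which give $B_\alpha=\emptyset$. I would then check that the formula agrees with these conventions. When $k_\alpha = n_{\text{cal}}+1$, the formula gives $1 + (1-k_\alpha)/(n_{\text{cal}}+1) = 1/(n_{\text{cal}}+1)$, which matches $|B_\alpha|=0$. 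The remaining range $\alpha < 1/(n_{\text{cal}}+1)$ has measure zero, or it falls under the same value by the definition of $k_\alpha$ capped at $n_{\text{cal}}+1$. At $\alpha$ near $1$ we have $k_\alpha \le 1$, and the count $|B_\alpha| = n_{\text{cal}}$ remains consistent. Finally, the integrand is a step function in $\alpha$, so the integral is well defined. Endpoint discrepancies occur only on a null set and do not affect the value.
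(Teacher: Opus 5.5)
Your argument follows the paper's proof step for step: take the $B_\alpha$ form of Proposition~\ref{prop: mmi_pi}, replace the infimum by $\hat q_{1-\alpha}$, count $|B_\alpha| = n_{\text{cal}}+1-k_\alpha$, and integrate. It inherits a genuine error in the counting step. Because $C_\alpha(x)=\{y: s(x,y)\le \hat q_{1-\alpha}\}$, every $y\notin C_\alpha(x)$ has $s(x,y) > \hat q_{1-\alpha} = s_{\sigma(k_\alpha)}$ \emph{strictly}. The infimum is therefore never attained outside $C_\alpha(x)$, and no excluded label counts the calibration score $s_{\sigma(k_\alpha)}$ itself. For $k_\alpha\le n_{\text{cal}}$, the largest excluded p-value is reached just above $s_{\sigma(k_\alpha)}$, where only $s_{\sigma(k_\alpha+1)},\dots,s_{\sigma(n_{\text{cal}})}$ are counted. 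Hence
\begin{align*}
\sup_{y\notin C_\alpha(x)} \pi_x(y) = \frac{n_{\text{cal}}+1-k_\alpha}{n_{\text{cal}}+1},
\end{align*}
which is $1/(n_{\text{cal}}+1)$ below $(1+|B_\alpha|)/(n_{\text{cal}}+1)$. Moving the supremum over $y$ inside the count as an infimum of $s(x,y)$, as in the second line of Proposition~\ref{prop: mmi_pi}, is valid only when that infimum is attained. Your boundary check actually detects this and then misreads it. For $\alpha<1/(n_{\text{cal}}+1)$ the first expression has integrand $\sup_{\emptyset}\pi_x=0$, while your formula gives $1/(n_{\text{cal}}+1)$, so they do not agree. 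Note also that this range has measure $1/(n_{\text{cal}}+1)$, not zero, and is exactly where $k_\alpha=n_{\text{cal}}+1$.

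No bookkeeping can rescue the displayed identity. Since $y\notin C_\alpha(x)$ iff $\pi_x(y)\le\alpha$, the integrand in the first line of Proposition~\ref{prop: mmi_pi} is at most $\alpha$, so $\operatorname{MMI}_{\{\pi_x\}}(\overline{\PP_x})\le 1/2$ for every score. But $\int_0^1\lceil (n_{\text{cal}}+1)(1-\alpha)\rceil\,d\alpha=(n_{\text{cal}}+2)/2$, so the claimed right-hand side equals $\frac{n_{\text{cal}}+2}{2(n_{\text{cal}}+1)}>\frac12$. Concretely, take $n_{\text{cal}}=1$ and the absolute residual score. Then $\circint \pi_x\,d\overline{\PP_x}=1$ and $\circint \pi_x\,d\underline{\PP_x}=3/4$, so the MMI is $1/4$, whereas the formula gives $3/4$. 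With the correct count, the integrand is $1-k_\alpha/(n_{\text{cal}}+1)$ on all of $[0,1)$, including the range $k_\alpha=n_{\text{cal}}+1$ where it correctly vanishes. This gives
\begin{align*}
\operatorname{MMI}_{\{\pi_x\}}(\overline{\PP_x}) = 1-\int_0^1 \frac{\lceil (n_{\text{cal}}+1)(1-\alpha)\rceil}{n_{\text{cal}}+1}\,d\alpha = \frac{n_{\text{cal}}}{2(n_{\text{cal}}+1)}.
\end{align*}
The qualitative conclusion survives: the quantity does not depend on $x$. The stated constant, however, is off by $1/(n_{\text{cal}}+1)$. The paper's own proof makes the same slip: its claimed minimiser $y^\star=\hat f(x)+\hat q_{1-\alpha}$ has $s(x,y^\star)=\hat q_{1-\alpha}$ and so lies \emph{inside} $C_\alpha(x)$. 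Following it faithfully therefore could not have produced a valid proof of the statement as written.
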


Proposition~\ref{prop: regression_mmi_ch} implies that for scores satisfying conditions $\inf_{y\not\in C_\alpha(x)}s(x,y) = \hat{q}_{1-\alpha}$, the quantified EPU is constant across any prediction instances $x$. This might look counterintuitive at first, especially when the condition is met by many standard regression scores $s(x, \cdot)$ that are continuous and monotonic in $y$ and can take values in $\RR_{+}$, such as 
\begin{itemize}
    \item \textbf{absolute residual}: $s_1(x,y) = |y-\hat{f}(x)|$,
    \item \textbf{weighted residual}: $s_2(x,y) = |y - \hat{f}(x)|/w(x)$, and 
    \item \textbf{quantile regression score:} $s_3(x,y)=\max\{\hat{f}_\ell(x) - y, y - \hat{f}_u(x)\},$
\end{itemize}
where $w(x)$ denotes certain weights and $\hat{f}_\ell, \hat{f}_u$ denotes certain quantile regressors.
Nonetheless, this observation is consistent with existing findings that conformal regression is generally poor at distinguishing test instances based on the sizes of their prediction intervals~\citep{bostrom2020mondrian}. In particular, when using the absolute residual score $s_1$, the conformal prediction region takes the form $$[\hat f(x)\pm \hat q_{1-\alpha}],$$ whose length $2\hat q_{1-\alpha}$ is uniform across all test instances. Even when employing score functions that yield adaptive intervals, such adaptivity arises from information encoded in the predictive model $\hat f$ or in the estimated quantile functions $\hat f_\ell$ and $\hat f_u$, rather than from the conformal procedure. This is apparent when we examine the interval lengths $2\hat q_{1-\alpha} w(x)$ and $2\hat q_{1-\alpha} + \hat f_u(x) - \hat f_\ell(x)$, obtaining from using scores $s_2$ and $s_3$ respectively. Since crucially, in all these cases, the conformal component—namely, the degree to which a test prediction deviates from the calibration set—remains unchanged across instances, as evidenced by the common appearance of the calibration quantile $\hat q_{1-\alpha}$ in all interval constructions. Consequently, although the resulting prediction intervals may vary in length, such variation does not reflect differences in epistemic predictive uncertainty induced by the conformal procedure itself.

\subsubsection{Why the difference in Classification v.s. Regerssion?}
\label{subsubsec: classvsreg}


Propositions~\ref{prop: k_class_quantifiers} and~\ref{prop: regression_mmi_ch} reveal a fundamental difference between EPU in classification and regression. This contrast stems from how the nonconformity score behaves as a function of the output variable $y$, which in turn determines what information the conformal procedure can extract from the calibration data.

In classification, the nonconformity score is defined over a discrete label space and may vary arbitrarily across classes, without any continuity or monotonicity constraints. This discreteness induces nontrivial, instance-dependent conformal p-value distributions. As a result, EPU quantification in classification captures genuine differences in how compatible a test instance $x$ is with the calibration set.

In regression, by contrast, most commonly used nonconformity scores are typically continuous in $y$ and monotone in the distance from a central prediction\footnote{Specifically, constructing conformal prediction regions via posterior predictive level sets \citep{fong2021conformal} may circumvent this issue; however, due to scope limitations, we leave a detailed investigation to future work.}. Under such constructions, the critical quantities appearing in Propositions~\ref{prop: mmi_pi} and~\ref{prop: regression_mmi_ch} reduce to constants determined solely by the calibration data. Consequently, the induced EPU is invariant across test instances.

In summary, classification admits instance-specific conformal p-value distributions due to label-space discreteness, whereas in regression, continuity and monotonicity of the score function eliminate test-specific variability. Figure~\ref{fig: p-value dist} illustrates this effect. By Proposition~\ref{prop: IHDR_equals_CPR}, conformal p-values define an implicit predictive credal set; identical p-value distributions therefore induce identical credal sets, implying that any credal-set-based EPU measure assigns the same value to all test instances. For this reason, since common conformal regression provides no informative signal for distinguishing test instances, we do not include regression experiments.

\begin{figure}
    \centering
    \includegraphics[width=0.7\linewidth]{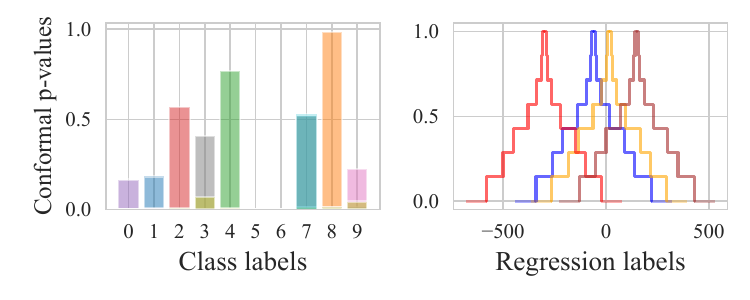}
    \caption{Conformal p-value portfolio for classification and regression, where each colour represents a different test instance. While conformal classification exhibits instance-specific p-value distributions, conformal regression yields distributions with identical shapes across instances.}
    \label{fig: p-value dist}
\end{figure}

\subsection{Discussion on the consonance assumption}
\label{subsec: cosonance}

The consonance assumption provides a bridge between CP and IP, enabling EPU to be quantified via the amount of conflict encoded within the induced predictive credal set. Nonetheless, this naturally raises the question of how such a modification affects the underlying conformal procedure. We answer this with the following propositions. 
        

\begin{restatable}{proposition}{propseven}
\label{prop: consonance_effect}
    Let $\tilde{\pi}_x$ be the modified consonant transducer from the original transducer $\pi_x$. 
    Let $\tilde{C}_\alpha(\cdot)$ be the CPR constructed using $\tilde{\pi}_x$. Then, for any test instances $x$, it is true that 
    \begin{enumerate}
        \item we have $$\tilde{C}_\alpha(x) \backslash C_\alpha(x) =\{y_{\sigma(1)}\}$$ if and only if $C_\alpha(x)=\emptyset$, otherwise $$\tilde{C}_\alpha(x) =C_\alpha(x).$$
        \item For prediction on unseen data $(X_{n+1}, Y_{n+1})$, the marginal coverage guarantee is preserved, i.e. $$\mathbb{P}(Y_{n+1}\in \tilde{C}_\alpha(X_{n+1}))\geq 1-\alpha.$$
    \end{enumerate}
\end{restatable}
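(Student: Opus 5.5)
We argue directly from the construction in \eqref{eq: consonance}. The modified transducer $\tilde{\pi}_x$ agrees with $\pi_x$ on every label except $y_{\sigma(1)}$, where $\tilde{\pi}_x(y_{\sigma(1)}) = 1 \geq \pi_x(y_{\sigma(1)})$. Since $\tilde{\pi}_x \geq \pi_x$ pointwise, we get $C_\alpha(x) \subseteq \tilde{C}_\alpha(x)$. Also $\tilde{C}_\alpha(x)\setminus C_\alpha(x) \subseteq \{y_{\sigma(1)}\}$, because only this label changed value. For $\alpha<1$, $y_{\sigma(1)}$ always belongs to $\tilde{C}_\alpha(x)$, since $1>\alpha$. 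So the whole question in part 1 is whether $y_{\sigma(1)}$ already belongs to $C_\alpha(x)$.

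\textbf{Part 1.} We split on whether $C_\alpha(x)$ is empty. If $C_\alpha(x)\neq\emptyset$, some label $y$ has $\pi_x(y)>\alpha$. Then $\pi_x(y_{\sigma(1)}) = \max_y \pi_x(y) > \alpha$, so $y_{\sigma(1)}\in C_\alpha(x)$ and $\tilde{C}_\alpha(x)=C_\alpha(x)$. Conversely, if $C_\alpha(x)=\emptyset$, then $y_{\sigma(1)}\notin C_\alpha(x)$ but $y_{\sigma(1)}\in\tilde{C}_\alpha(x)$, so the difference is exactly $\{y_{\sigma(1)}\}$. This gives the ``if and only if'' together with the ``otherwise'' clause. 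The case $\alpha=1$ is trivial, since both regions are empty when $\tilde\pi_x\le 1$ and the inequality is strict. The tie-breaking convention matters only in that $y_{\sigma(1)}$ is well defined and attains the maximum.

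\textbf{Part 2.} Coverage follows from monotonicity. Since $C_\alpha(X_{n+1})\subseteq\tilde{C}_\alpha(X_{n+1})$ pointwise for every realisation, we have
\[
\mathbb{P}\bigl(Y_{n+1}\in\tilde{C}_\alpha(X_{n+1})\bigr)\geq \mathbb{P}\bigl(Y_{n+1}\in C_\alpha(X_{n+1})\bigr)\geq 1-\alpha.
\]
The last inequality is the standard split-CP guarantee recalled in Section~\ref{subsec: cp_intro}. It uses the equivalence between $\{y: s(x,y)\le \hat q_{1-\alpha}\}$ and $\{y:\pi_x(y)>\alpha\}$ under random tie-breaking. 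The modification is a deterministic, measurable function of the calibration scores and the test input, so no exchangeability argument needs to be redone.

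\textbf{Main obstacle.} The only delicate point is checking that the two descriptions of $C_\alpha(x)$, by quantile and by p-value, coincide. The strict inequality $\pi_x(y)>\alpha$ must match $s(x,y)\le\hat q_{1-\alpha}$, with $k=\lceil(1-\alpha)(n_{\mathrm{cal}}+1)\rceil$, so that the inherited coverage bound applies to the p-value form used in $\tilde C_\alpha$. I would verify this by counting: $\pi_x(y)>\alpha$ holds iff $|\{i:s_i\ge s(x,y)\}| > \alpha(n_{\mathrm{cal}}+1)-1$. With distinct scores, this is iff $s(x,y)\le s_{\sigma(k)}$, up to the usual boundary convention. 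If the conventions differ slightly, I would instead appeal directly to the p-value validity $\mathbb{P}(\pi_{X_{n+1}}(Y_{n+1})\le\alpha)\le\alpha$ under exchangeability, which yields the same conclusion.
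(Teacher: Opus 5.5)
Your proposal is correct and follows essentially the same route as the paper: pointwise domination $\tilde\pi_x\ge\pi_x$ gives $C_\alpha(x)\subseteq\tilde C_\alpha(x)$, which yields part 2 by monotonicity. Part 1 then follows from the split on whether $C_\alpha(x)$ is empty, i.e.\ whether $\max_y\pi_x(y)>\alpha$. Your dichotomy is in fact slightly cleaner than the paper's. The paper's two cases ($\pi_x(y)<\alpha$ for all $y$ versus $\pi_x(y)>\alpha$ for some $y$) skip the boundary case $\max_y\pi_x(y)=\alpha$.

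One small correction to an aside: at $\alpha=1$ the claim is not trivially true but actually fails. Both regions are empty, so $\tilde C_1(x)\setminus C_1(x)=\emptyset$ even though $C_1(x)=\emptyset$. This is harmless only because the paper takes $\alpha\in(0,1)$.
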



Proposition~\ref{prop: consonance_effect} implies that the stretched transducer enlarges the prediction set only when the original prediction set is empty; otherwise, the two sets coincide. Moreover, the marginal coverage guarantee is preserved. Consequently, enforcing consonance precludes conformal prediction from producing empty prediction sets, thereby removing its ability to explicitly signal extreme predictive uncertainty in rare or highly atypical prediction scenarios. While we acknowledge this limitation, we emphasise that enforcing consonance in return yields a stronger statistical guarantee than marginal coverage alone, as we show below.

\begin{restatable}[Uniform validity of consonance  conformal prediction~\citep{cella}]{theorem}{theoremone}
    \label{thm: uniform validity}
    Consonance conformal transducers yield a conformal procedure that is uniformly valid, that is $$\mathbb{P}(\pi_{X_{n+1}}(Y_{n+1}) \leq \alpha) \leq \alpha, $$ for all $n$ and exchangeable $\mathbb{P}$. Moreover, uniform validity is equivalent to satisfying
    \begin{align*}
        \mathbb{P}\left(\left\{\overline{\PP_{X_{n+1}}}(A) \leq \alpha \text{ and } Y_{n+1}\in A \text{ for some A})\right\}\right) \leq \alpha
    \end{align*}
    for all $n, \alpha$ and exchangeable $\mathbb{P}$.
\end{restatable}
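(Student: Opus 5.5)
We prove the two parts of Theorem~\ref{thm: uniform validity} separately. The first part is the validity of the consonant transducer $\tilde\pi_x$; the second is the equivalence with the set-wise statement for the plausibility measure of Proposition~\ref{prop: consonance upper prob}.

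\textbf{Validity.} The plan is a pointwise domination argument. By construction in \eqref{eq: consonance}, the modified transducer only raises one p-value, to $1$. Hence $\tilde\pi_x(y)\ge \pi_x(y)$ for every $x,y$, and so
\[
\{\tilde\pi_{X_{n+1}}(Y_{n+1})\le\alpha\}\subseteq\{\pi_{X_{n+1}}(Y_{n+1})\le\alpha\}.
\]
It therefore suffices to show that the original split-CP p-value is super-uniform, i.e.\ $\mathbb{P}(\pi_{X_{n+1}}(Y_{n+1})\le\alpha)\le\alpha$. This follows from exchangeability of $s_1,\dots,s_{n_\text{cal}},s_{n+1}$ together with the randomised tie-breaking, which makes the scores a.s.\ distinct. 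The rank $R$ of $s_{n+1}$ among the $n_\text{cal}+1$ scores is then uniform on $\{1,\dots,n_\text{cal}+1\}$. Moreover $\pi_{X_{n+1}}(Y_{n+1})=R/(n_\text{cal}+1)$, where $R$ counts scores $\ge s_{n+1}$, including $s_{n+1}$ itself. Consequently
\[
\mathbb{P}\bigl(R\le\alpha(n_\text{cal}+1)\bigr)=\lfloor\alpha(n_\text{cal}+1)\rfloor/(n_\text{cal}+1)\le\alpha.
\]
This argument is the same as for Proposition~\ref{prop: consonance_effect}(2), and can be cited from there.

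\textbf{Equivalence.} Write $E_\alpha=\{\exists A:\ \overline{\PP_{X_{n+1}}}(A)\le\alpha,\ Y_{n+1}\in A\}$. We show that $E_\alpha$ coincides with $\{\pi_{X_{n+1}}(Y_{n+1})\le\alpha\}$ as events, so that the two probability bounds are literally the same statement; this gives both directions at once.
\begin{itemize}
\item If $\pi_x(y)\le\alpha$, take $A=\{y\}$. Then $\overline{\PP_x}(A)=\pi_x(y)\le\alpha$ and $y\in A$.
\item Conversely, suppose $y\in A$ with $\overline{\PP_x}(A)\le\alpha$. Then $\pi_x(y)\le\sup_{y'\in A}\pi_x(y')=\overline{\PP_x}(A)\le\alpha$.
\end{itemize}
Both directions use only the maxitive form $\overline{\PP_x}(A)=\sup_{y\in A}\pi_x(y)$ from Proposition~\ref{prop: consonance upper prob}, and they hold for every $n$, $\alpha$ and exchangeable $\mathbb{P}$.

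The only subtle points are measurability of singletons in $\cF_\cY$, which holds in the discrete and Borel cases, and the handling of ties. We assumed ties are broken by randomisation, so the rank is exactly uniform; without that, the bound still holds by the usual conservative argument.
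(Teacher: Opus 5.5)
Your proof is correct, but it takes a different route from the paper, whose proof consists solely of the citation \citet[Theorem 1]{cella}.

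You instead argue from first principles, in three steps:
\begin{enumerate}
\item super-uniformity of the split-CP p-value, from the uniform rank of the test score among the $n_{\text{cal}}+1$ exchangeable, tie-broken scores;
\item transfer to the stretched transducer of \eqref{eq: consonance} through the pointwise domination $\tilde\pi_x\ge\pi_x$;
\item the equivalence, through the event identity $\{\exists A:\ \overline{\PP_{X_{n+1}}}(A)\le\alpha,\ Y_{n+1}\in A\}=\{\pi_{X_{n+1}}(Y_{n+1})\le\alpha\}$, obtained from singletons and maxitivity.
\end{enumerate}

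The citation is shorter, but the cited result is formulated for full conformal transducers. Applying it here tacitly requires two checks: viewing split CP as full CP with a bag-independent score (conditionally on the training split), and verifying that stretching preserves validity. Your argument makes both steps explicit and keeps the theorem self-contained within the split-CP setting the paper actually uses.

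It also exposes two facts the citation hides. First, consonance is needed neither for the validity bound nor for the equivalence; it only ensures that $\overline{\PP_x}$ is a genuine upper probability, as in Proposition~\ref{prop: consonance upper prob}. Second, the ``for some $A$'' event, in general a union over uncountably many sets, is automatically measurable, because it coincides with a single event defined through $\pi_{X_{n+1}}(Y_{n+1})$.
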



This condition is strictly stronger than the marginal coverage guarantee of standard conformal prediction, in the sense that the former implies the latter; see \citet[Proposition~2]{cella} for details. Intuitively, this result goes beyond coverage guarantees: enforcing consonance allows us to guarantee the reliability of the resulting implicit probabilistic predictors. In particular, when the upper probability $\overline{\mathbb{P}}_x$ assigns low plausibility to an event $Y_{n+1}\in A$, the actual probability that $Y_{n+1}\in A$ occurs is also small. 

Overall, enforcing consonance entails a trade-off of expressive power, in that empty prediction sets are no longer permitted, but it otherwise preserves the conformal procedure and upgrades the associated statistical guarantee from marginal coverage to Type~II validity.

\section{Related Work}
\label{sec: related work}

\paragraph{CP and epistemic uncertainty.} On the topic of conformal prediction (CP) and epistemic uncertainty, \citet{karimi2023quantifying} proposed a heuristic normalisation of CPR sizes at fixed $\alpha$ to $[0,1]$ based on the calibration set size. Since this transformation preserves the instance-wise ranking induced solely by CPR sizes, it does not affect ranking-based decision rules; we therefore omit it from our experiments. \citet{javanmardi2025optimal} incorporated second-order predictions into CP, and \citet{cabezasepistemic2025} incorporated the EPU of a general predictive model into score design; in contrast, we study the EPU induced by the conformal procedure itself. In a related but orthogonal direction, \citet{caprioconformalized2025} and \citet{javanmardi2024conformalized} studied conformal classification with ambiguous (distribution-valued) labels rather than one-hot encodings, resulting in predictive credal sets with conformal-like guarantees; by contrast, we characterise the credal set induced by an arbitrary CP procedure.

 
\section{Experiments}
\label{sec: experiments}

\begin{table}[t!]
\vspace{-1em}
    \centering
    \caption{Experiment results: $\dagger$/$\star$ denote statistically significantly worse performance than $\operatorname{MMI\text{-}CP}_{\pi_x}$/$\operatorname{MMI\text{-}CP}_{\text{TV}}$, respectively (Wilcoxon signed-rank test, 5\%). Results are averaged over 10 seeds; one standard error is reported.}
    \begin{subtable}[h]{0.6\textwidth}
        \centering
        \caption{(Active learning) Accuracies achieved at the final step.}
\resizebox{\textwidth}{!}{%
\begin{tabular}{ccccc}
\toprule
 & Digits v1 & Digits v2  & Digits v3 & Letters \\
 \midrule
 \# classes & $10$ & $10$ & $10$ & $26$ \\
\midrule

${\operatorname{MMI\text{-}CP}_{\text{TV}}}$
& ${95.30_{\pm 0.29}}^\dagger$
& ${97.69_{\pm 0.29}}$
& $95.75_{\pm 0.47}$
& $\mathbf{82.00_{\pm 0.39}}$ \\

$\operatorname{MMI\text{-}CP}_{\pi_x}$
& $\mathbf{95.80_{\pm 0.40}}$
& $\mathbf{97.83_{\pm 0.42}}$
& $\mathbf{95.90_{\pm 0.41}}$
& $81.40_{\pm 0.61}$ \\

$|C_{0.01}(\cdot)|$
& ${94.00_{\pm 0.22}}^{\dagger\star}$
& ${94.91_{\pm 0.14}}^{\dagger\star}$
& ${94.30_{\pm 0.40}}^{\dagger\star}$
& ${81.32_{\pm 0.33}}^{\dagger\star}$ \\

$|C_{0.05}(\cdot)|$
& ${95.30_{\pm 0.53}}^\dagger$
& ${97.01_{\pm 0.14}}^{\dagger\star}$
& ${94.90_{\pm 0.51}}^{\dagger}$
& ${80.86_{\pm 0.17}}^{\dagger\star}$ \\

$|C_{0.1}(\cdot)|$
& ${95.05_{\pm 0.53}}^\dagger$
& ${96.74_{\pm 0.40}}^{\dagger\star}$
& ${95.85_{\pm 0.34}}^{\dagger}$
& $80.75_{\pm 0.85}$ \\

$|C_{0.2}(\cdot)|$ 
& ${95.40_{\pm 0.25}}^\dagger$ 
& ${97.01_{\pm 0.28}}^{\dagger\star}$ 
& $95.80_{\pm 0.19}$ 
& $81.10_{\pm 0.41}$ \\

$|C_{0.3}(\cdot)|$ 
& ${95.40_{\pm 0.64 }}^\dagger$
& ${96.90_{\pm 0.23 }}^{\dagger\star}$
& $95.55_{\pm 0.80 }$
& $81.05_{\pm 0.27}$ \\

\bottomrule
\label{tab: active_learning}
\end{tabular}
}        
    \end{subtable}
    \hfill 
    \begin{subtable}[h]{0.6\textwidth}
        \centering
        \caption{(Selective classification) Area under ARC.}
        \resizebox{\textwidth}{!}{%
        \begin{tabular}{ccccc}
\toprule
& Cifar10 & Cifar100 & Caltech & FMNIST\\
\midrule
 \# classes & $10$ & $100$ & $100$ & $10$ \\
\midrule

$\operatorname{MMI\text{-}CP}_{\text{TV}}$
& $97.34_{\pm 1.42}$
& $88.16_{\pm 0.87}$
& $\mathbf{98.53_{\pm 0.03}}$
& $\mathbf{97.54_{\pm 0.27}}$ \\

$\operatorname{MMI\text{-}CP}_{\pi_x}$
& $\mathbf{97.68_{\pm 0.46}}$
& $\mathbf{88.28_{\pm 0.91}}$
& $\mathbf{98.53_{\pm 0.03}}$
& ${97.52_{\pm 0.29}}^\star$ \\


$|C_{0.01}(\cdot)|$
& ${97.37_{\pm 0.43}}^{\dagger}$
& ${85.92_{\pm 0.94}}^{\dagger\star}$
& ${98.43_{\pm 0.14}}^{\dagger\star}$
& ${97.29_{\pm 0.29}}$ \\

$|C_{0.05}(\cdot)|$
& ${95.66_{\pm 0.62}}^{\dagger\star}$
& ${86.81_{\pm 1.01}}^{\dagger\star}$
& ${98.43_{\pm 0.08}}^{\dagger\star}$
& ${94.39_{\pm 0.76}}^{\dagger\star}$ \\

$|C_{0.1}(\cdot)|$
& ${95.15_{\pm 0.52}}^{\dagger\star}$
& ${86.36_{\pm 1.16}}^{\dagger\star}$
& ${98.30_{\pm 0.14}}^{\dagger\star}$
& ${93.59_{\pm 0.86}}^{\star}$ \\

$|C_{0.2}(\cdot)|$ 
& ${94.30_{\pm 0.83}}^{\dagger\star}$ 
& ${83.19_{\pm 2.11}}^{\dagger\star}$
& ${98.09_{\pm 0.22}}^{\dagger\star}$ 
& ${93.00_{\pm 0.58}}^{\star}$\\

$|C_{0.3}(\cdot)|$ 
& ${93.73_{\pm 1.74}}^{\dagger\star}$ 
& ${81.53_{\pm 1.97}}^{\dagger\star} $
& ${97.95_{\pm 0.24}}^{\dagger\star}$ 
& ${92.69_{\pm 1.09}}^{\dagger\star}$ \\

\bottomrule
    \label{tab: ARC}
\end{tabular}
}        
    \end{subtable}   
    \label{tab:main}
\end{table}

\begin{figure*}
    \centering
    \begin{subfigure}[t]{\textwidth}
        \centering
        \includegraphics[width=\linewidth]{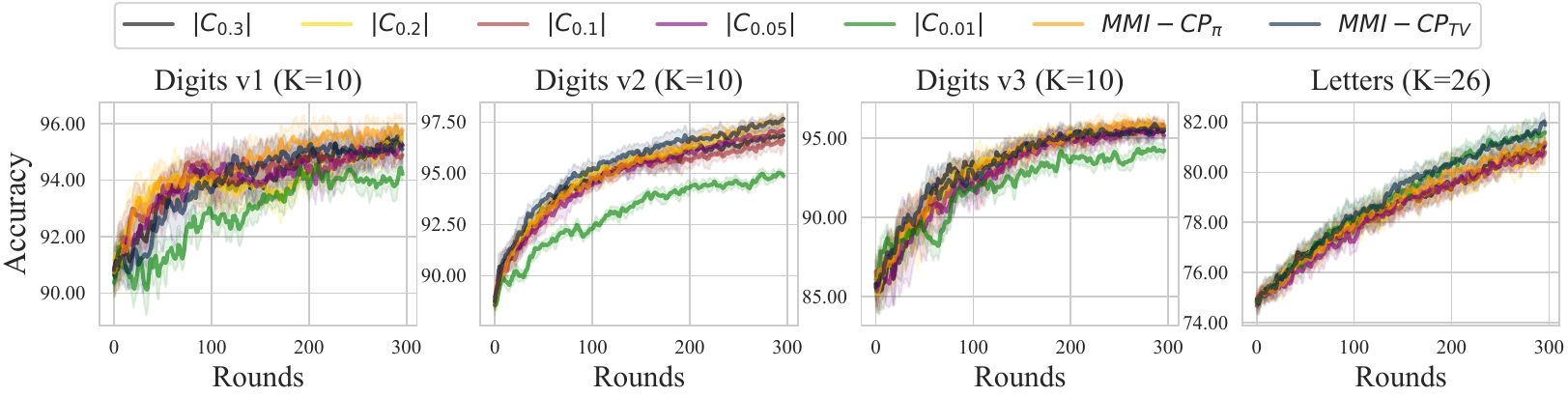}
        \caption{Accuracies of the predictive model at each active learning round.}
        \label{fig: active_learning}
    \end{subfigure}\\
    \begin{subfigure}[t]{\textwidth}
        \centering
        \includegraphics[width=\linewidth]{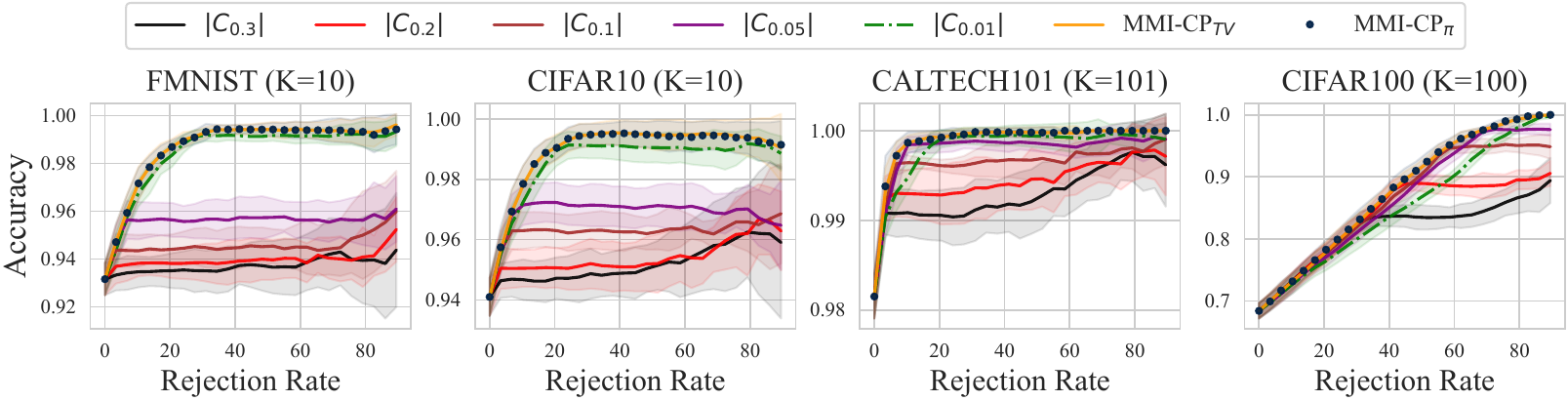}
        \caption{The accuracy-rejection curve for selective classification experiments.}
        \label{fig: AURUC}
    \end{subfigure}
    \caption{Results averaged over 10 seeds and 1 standard error reported. In general, MMI-based approaches outperform the set-size-based approaches, suggesting that the former provide more fine-grained epistemic uncertainty information for downstream decision-making.}
\end{figure*}


This section demonstrates the practical utility of our proposed $\operatorname{MMI-CP}$ methods. Since ground-truth EPU does not exist, the informativeness of EPU measures is commonly assessed through their effectiveness in supporting downstream decision-making. We therefore conduct two sets of experiments---active learning and selective classification---to evaluate the advantages of our methods over relying on CPR sizes. We denote our MMI-based approaches as $\operatorname{MMI-CP}_{\text{TV}}$~(Proposition~\ref{prop: mmi_tv}) and $\operatorname{MMI-CP}_{\pi_x}$~(Proposition~\ref{prop: mmi_pi}), and compare against CPR sizes $|C_{\alpha}|$ at confidence levels $\alpha \in \{0.01, 0.05, 0.10, 0.20, 0.30\}$. For both experiments, we use the most probable outcome from $\hat{f}$ to make point prediction, while using conformal procedures to extract the conformal p-values to either compute the relevant MMI measures or the conformal set sizes. Due to space constraints, full experimental details, statistical significance tests, and ablation studies on how choices of score functions and the calibration set sizes affect our results are provided in Appendix~\ref{appendix: exp}. All experiments are repeated over $10$ random seeds, and the code is available at \citet{anonymous2026conformalmmi}.


\paragraph{Active learning.}
Following \citet{shaker} and \citet{thomas2024improving}, we evaluate the informativeness of the proposed approaches by their ability to select the next data point to query that most effectively improves learning. Informative EPU measures should identify instances whose labels' acquisition reduces predictive ambiguity the most, making active learning performance a practical proxy for EPU quality. We conduct experiments on four datasets from the OpenML repository~\citep{vanschoren2014openml}, comprising three datasets with $10$ classes and one dataset with $26$ classes. In all experiments, we use a random forest classifier as the base learner, and the adaptive prediction sets score~(APS)~\citep{romano2020classification} as the nonconformity score. 
At each round, we train a random forest model on $70\%$ of the training data, evaluate its accuracy on a separate and fixed held-out test set, and compute EPU measures via the conformal procedure using the remaining $30\%$ calibration data. We then select the most epistemically uncertain instance from the unlabelled pool, query its label, augment the training set, and repeat this process for $300$ rounds.

\textbf{Results. } Figure~\ref{fig: active_learning} presents accuracy trajectories during active learning, and Table~\ref{tab: active_learning} reports the final-round accuracy. Overall, the MMI-based methods, especially $\operatorname{MMI-CP}_{\pi_x}$, consistently achieve higher performance on average and, in many cases, statistically significantly outperform the fixed $\alpha$ set-size-based methods. We observe no systematic trend with respect to the choice of the tolerance level~$\alpha$: in some cases, smaller values of~$\alpha$ yield better performance, while in others larger values perform better. As the number of classes increases, prediction set sizes can take a wider range of values and may therefore become more informative. Consistent with this intuition, the performance gap between set-size-based methods and MMI-based methods narrows in higher-class problems. Nevertheless, even in these settings, $\alpha$ set-size-based methods still underperform relative to their MMI-based counterparts.

\paragraph{Selective classification.} Next, following \citet{shaker2021ensemble} and \citet{chau2025integral}, we evaluate the proposed approaches by their ability to rank test instances according to their prediction difficulty. Specifically, we consider the accuracy-rejection curve (ARC), which plots predictive accuracy as a function of the rejection rate. An ideal uncertainty-aware model that abstains on the most uncertain $p\%$ of inputs and predicts on the remaining $(1-p)\%$ should exhibit increasing accuracy as $p$ increases. Consequently, an informative EPU measure that reliably identifies confident instances should produce a non-decreasing ARC that approaches the top-left corner of the plot. We conduct experiments on four standard benchmark datasets—CIFAR10, CIFAR100, Caltech101, and FMNIST—of which two have $10$ classes, and two have $100$ classes. We use pretrained deep neural networks from PyTorch~\citep{paszke2019pytorch} and Hugging Face as predictive models and, as before, employ the adaptive prediction set score as the nonconformity score. The calibration set is obtained from a subset of the withheld data, and the remaining samples are used to compute accuracy on a held-out test set of size $1000$ samples.

\textbf{Results. } Figure~\ref{fig: AURUC} and Table~\ref{tab: ARC} present the ARCs and the area under ARCs, respectively. Consistent with the active learning results, the MMI-based approaches—on average and in most cases significantly—outperform the set-size-based methods. We also observe a similar trend with respect to the number of classes: as the number of classification labels increases, the performance gap between set-size-based approaches and MMI-based methods narrows, although MMI-based approaches remain superior. While smaller values of~$\alpha$ often appear to yield more discriminative EPU proxies in ARC-based evaluations, this trend does not carry over to the active learning setting. In fact, smaller~$\alpha$ can lead to inferior active learning performance. This discrepancy suggests that there is no universally optimal choice of~$\alpha$ for acting as a proxy for EPU, and that the optimal choice is problem-dependent. In contrast, the MMI-based methods, grounded in rigorous theoretical analysis, exhibit robust performance across different evaluation settings.

\section{Discussion}
\label{sec: discussion}

We study the problem of quantifying epistemic predictive uncertainty in conformal prediction by proving that (split) CP implicitly defines an imprecise probabilistic predictor and leverage this insight to adapt the general MMI framework to the conformal setting, yielding analytically tractable and computationally efficient uncertainty measures. Experiments on active learning and selective prediction demonstrate improved practical performance on EPU quantification in conformal classification. We also identify fundamental limitations for quantifying EPU in conformal regression. We argue that these negative results reflect intrinsic properties of standard conformal regression rather than shortcomings of our approach.

In future work, we will extend our framework to quantifying epistemic uncertainty for other conformal settings, such as conformal decision-making~\citep{vovk2018conformal}, or conformal prediction for time series and online learning problems~\citep{sale2025online}.

Further discussion of the interpretations of our results and the technical background is provided in Appendices \ref{appendix: concepts} and \ref{appendix: ipml}.


\resumetoc


\newpage
\appendix
\onecolumn

\tableofcontents

\newpage
\section{Further discussions on concepts and interpretations.}
\label{appendix: concepts}

We include this section in the appendix to provide additional exposition of several concepts that are only briefly mentioned in the main text due to space constraints. 

\subsection{Analogy to Bayesian predictions and interpretation of uncertainty}

As the problem of quantifying epistemic predictive uncertainty in conformal prediction is relatively new, we devote this subsection to a deeper discussion of the introduced concepts, with particular emphasis on what types of uncertainty are being quantified and at which stage they arise. To facilitate this discussion, we draw an analogy with the Bayesian prediction paradigm. In particular, rather than adopting the standard aleatoric-epistemic dichotomy, we instead pose a different question: what do the various quantified uncertainties actually describe, and to which components of the prediction process are they associated? 
\begin{center}
    \emph{In simple words, what really matters here is to differentiate between uncertainty associated with \textbf{making a prediction} from uncertainty associated with the \textbf{prediction} itself.} 
    \emph{For instance, epistemic predictive uncertainty describes the uncertainty associated with the former, and measuring prediction set sizes describes the uncertainty associated with the latter.}
\end{center}
As these concepts are rarely discussed in depth in machine learning, drawing an analogy to Bayesian predictions could be more insightful. 

\begin{figure}[H]
    \centering
    \begin{subfigure}[t]{0.48\textwidth}
        \centering
        \includegraphics[width=\linewidth]{figures/illustration_probabilistic_predictor.pdf}
        \caption{Bayesian probabilistic prediction}
    \end{subfigure}
    \hfill
    \begin{subfigure}[t]{0.48\textwidth}
        \centering
        \includegraphics[width=\linewidth]{figures/illustration_conformal_predictor.pdf}
        \caption{Conformal prediction (our focus)}
    \end{subfigure}
\end{figure}

\subsubsection{Bayesian Probabilistic Prediction.}

Bayesian probabilistic prediction is typically done based on the posterior predictive distribution
\[
\mathbb{P}(Y \mid X = x)
=
\int_{\Theta} \mathbb{P}(Y \mid X = x, \theta)\,\mathbb{P}(d\theta \mid D),
\]
which plays a central role in Bayesian reasoning and decision-making. The posterior predictive represents our belief about how the outcome variable \(Y\) would be distributed were we to observe \(X = x\).

There are various ways to make use of this distribution. For instance, when a point prediction is required, the task can be formulated as a decision problem,
\[
\hat y
=
\arg\min_{y \in \mathcal Y}
\mathbb{E}_{\mathbb{P}(Y \mid X = x)}[\ell(Y, y)],
\]
where \(\ell\) denotes a loss function.

To avoid measure-theoretic complications, consider the classification setting and ask the following question: \emph{what is the uncertainty underlying my decision to predict \(Y = \hat y\)?} A simple and intuitive answer is given by
\[
1 - \mathbb{P}(Y = \hat y \mid X = x),
\]
which quantifies how unconfident we are in this particular prediction.  
\textbf{Importantly, this notion of uncertainty is associated with a specific decision — namely, the decision to output the point prediction \(\hat y\).}

Beyond point prediction, one may also consider \emph{set-valued predictions}, although this perspective is less commonly emphasised in the machine learning literature. Given a posterior predictive distribution, such predictions can be constructed via the notion of a \emph{highest density region (HDR)}.\footnote{This should not be confused with credible intervals, which characterise uncertainty about model parameters, whereas highest density regions pertain to uncertainty about the outcome variable of interest.}

Formally, for a tolerance level \(\alpha \in (0,1)\), the HDR is defined as the smallest region \(R_\alpha \subseteq \mathcal Y\) satisfying
\[
\mathbb{P}(Y \in R_\alpha \mid X = x) = 1 - \alpha .
\]

If we now ask
\begin{center}
\textbf{“What is the uncertainty associated with the decision to predict \(R_\alpha\)?”}
\end{center}
a natural quantity to consider is
\[
1 - \mathbb{P}(Y \in R_\alpha \mid X = x),
\]
which equals \(\alpha\) by construction. This is entirely expected, since the set-valued prediction itself was explicitly defined through the prescribed tolerance level.

However, if instead we ask
\begin{center}
\textbf{“What uncertainty should be associated with the object produced by the decision to issue a set-valued prediction?”}
\end{center}
we are no longer evaluating the uncertainty of the decision, but rather the uncertainty of the resulting object — the region \(R_\alpha\) itself. A seemingly natural response is to quantify this uncertainty through the size of the region, measured using an appropriate notion of volume, such as the counting measure in discrete spaces or the Lebesgue measure in continuous ones.

So how does \emph{epistemic predictive uncertainty} enter the picture? As described in the main text, it can be understood as follows:
\begin{center}
\textbf{Epistemic predictive uncertainty refers to the uncertainty arising from reasoning about \(Y\) probabilistically in the presence of multiple plausible predictive distributions.}
\end{center}
In contrast to decision-based uncertainty — which concerns the reliability of a particular prediction or prediction set — epistemic predictive uncertainty characterises ambiguity at the level of the predictive model itself.

We can summarise these concepts with the following table


\begin{table}[H]
\centering
\caption{Diving deep into what various uncertainties are associated with and how they could be quantified?}
\resizebox{0.7\textwidth}{!}{
    \begin{tabular}{c|c}
\toprule
\textbf{Uncertainty associated to}                                                        & \makecell{How to measure such uncertainties \\ under Bayesian context}  \\
\midrule
\makecell{The decision to construct \\ a set-valued prediction at level $\alpha$}                  &    $\alpha$               \\
\midrule
\makecell{The resulting set-valued prediction at level $\alpha$}                               &    set sizes               \\
\midrule
\makecell{Reasoning about the outcome variable $Y$ \\ under multiple plausible predictive models} &        Epistemic predictive uncertainty      
\end{tabular}
}

\end{table}

\subsubsection{Conformal Prediction}

Now, what about conformal prediction? To clarify what quantifying epistemic predictive uncertainty truly entails, we draw an analogy with Bayesian prediction.

In the Bayesian framework, the posterior distribution $\mathbb{P}(\theta \mid D)$ represents uncertainty over a set of plausible models. Analogously, under the consonance assumption, a conformal prediction procedure implicitly induces a credal set of predictive distributions, capturing multiple predictive models consistent with the data and the conformal construction (e.g., the choice of nonconformity score and calibration set).

When making set-valued predictions under model uncertainty, a standard approach is to adopt a worst-case perspective. In particular, given a set of plausible models, one may construct the imprecise highest density region (IHDR; see \cref{def: IHDR}). In this case, the uncertainty associated with producing a prediction set at confidence level $\alpha$ is naturally quantified by $\alpha$ itself.

Similarly, once the IHDR—or equivalently, the conformal prediction region (see \cref{prop: IHDR_equals_CPR})—is fixed, the uncertainty associated with this set-valued prediction is commonly assessed via its size (e.g., cardinality in classification or length in regression).

Finally, a distinct notion of uncertainty arises when reasoning directly about the outcome variable $Y$ in the presence of multiple plausible predictive models. Unlike the Bayesian setting, where uncertainty is represented by a distribution over models, conformal prediction gives rise to a set of predictive distributions. In this case, epistemic predictive uncertainty is not determined by the size of the prediction set alone, but by the degree of conflicting information among the candidate predictive models.

This perspective closely mirrors the Bayesian treatment of epistemic uncertainty. In Bayesian prediction, mutual information quantifies disagreement---measured in units of predictive entropy---between the posterior predictive distribution and the predictive distribution induced by a fixed parameter, averaged over the posterior parameter distribution. In the credal setting, an analogous role is played by measures that quantify the extent of imprecision within the credal set, for example by evaluating the maximal discrepancy between optimistic and pessimistic expectations over a class of test functions. Conceptually, both approaches aim to measure how strongly plausible predictive models disagree about future outcomes. This perspective, to us, is the biggest novelty we are bringing to the community.

The following table summarises what we discussed.

\begin{table}[H]
\centering
\caption{Diving deep into what various uncertainties are associated with and how they could be quantified?}
\resizebox{\textwidth}{!}{
    \begin{tabular}{c|c|c}
\toprule
\textbf{Uncertainty associated to}                                                        & \makecell{How to measure such uncertainties \\ under Bayesian context} & \makecell{How to measure such uncertainties  \\under conformal context} \\
\midrule
\makecell{The decision to construct \\ a set-valued prediction at level $\alpha$}                  &    $\alpha$               &   $\alpha$                 \\
\midrule
\makecell{The resulting set-valued prediction at level $\alpha$}                               &    set sizes               &      set sizes              \\
\midrule
\makecell{Reasoning about the outcome variable $Y$ \\ under multiple plausible predictive models} &        Epistemic predictive uncertainty            &     \textcolor{red}{Before us, this question was unanswered.}              
\end{tabular}
}

\end{table}

\subsection{Is the credal set perspective necessary?}
It is natural for readers to wonder about the practical relevance of introducing the full credal set and imprecise probability perspective in this paper, especially given that, in the end, the proposed algorithm for quantifying the EPU of conformal prediction simply reduces to computing the following

\propfive*

which describes the ``shape'' of the conformal p-value ``distribution''. 

At first glance, one might therefore ask whether such a perspective is truly necessary. For instance, a seemingly simpler alternative would be to directly aggregate the conformal p-values—e.g., by summing them—without explicitly appealing to credal sets or imprecise probabilities. However, such an approach would discard the theoretical justification underlying the construction and reduce the method to an intuition-driven heuristic.

More fundamentally, it is not immediately clear how epistemic uncertainty should be quantified solely from this p-value ``distribution''. Conformal p-values do not form a probability distribution: they do not sum to one, and interpreting them as probabilities is not theoretically justified. Any attempt to rescale them into a probability distribution and subsequently apply entropy-based measures would therefore be ad hoc, lacking a principled semantic foundation.

In contrast, the credal set perspective provides a coherent interpretative framework in which these p-values arise naturally as lower and upper probabilistic constraints, enabling epistemic uncertainty to be quantified in a principled and well-defined manner.

\subsection{How is the epistemic uncertainty discussed here differ from the one in \citet{sale2025aleatoric}?}

\citet{sale2025aleatoric} also discuss aleatoric and epistemic uncertainty within the conformal prediction framework. However, their notion of epistemic uncertainty differs fundamentally from the one considered in this paper. Specifically, they adopt a learning-theoretic perspective, defining epistemic uncertainty as the functional discrepancy between the learned predictor and the ground-truth prediction function. This viewpoint is primarily motivated by the principle that epistemic uncertainty should diminish as more data are observed.

While conceptually appealing, this definition relies on an unobservable object—the ground-truth prediction function—which is inaccessible in practice. As a result, it does not admit a concrete estimation procedure and therefore remains largely conceptual.

In contrast, our notion of epistemic predictive uncertainty is defined entirely in terms of observable quantities induced by the conformal procedure itself—namely, the difficulty faced by the learner arising from the existence of multiple predictive models that are simultaneously consistent with the data. This definition aligns naturally with information-theoretic treatments of epistemic uncertainty and admits practical estimation procedures that facilitate downstream decision-making.

\newpage
\section{Background on Imprecise Probabilities and Imprecise Probabilistic Machine Learning}
\label{appendix: ipml}

While conformal prediction is a popular framework that is widely adopted in machine learning, imprecise probabilities and their integration into machine learning, are relatively less known by the broader community. To facilitate readers to better understand  our results, we hereby provide some more background information on this topic.

\subsection{Imprecise Probabilities}
Imprecise probabilities~(IP)~\citep{walley, decooman, augustin_introduction_2014} is a general mathematical framework to deal with uncertainties that cannot be purely captured through a single, precise probability measure alone. This type of uncertainties are often understood as uncertainty arising from ignorance or lack of knowledge, previously termed in various ways, such as `Knightian uncertainty'~\citep{knight1921risk}, `second-order uncertainty', or `unknown unknowns'. Precise probability struggles to model such types of uncertainty. For instance, it struggles to formally model missing, uncertain, or qualitative data~\citep[Section 1.3]{cuzzolin2020geometry} and fails to distinguish between indifference~(equal belief) and genuine ignorance~(absence of knowledge)~\citep[Section 1.1.4]{walley}; see also Hájek's~\citep{hajek_interpretations_2019} discussion on Laplace's and Bertrand's paradox~\citep{bertrand1889calcul}. At a more fundamental level, when probability is used to encode subjective belief or confidence, the additivity axiom of Kolmogorov implies that high uncertainty~(low confidence) about an event necessarily entails high certainty~(high confidence) about its complement. Yet under limited information, this may not hold---we might reasonably remain ambiguous about both.

Mathematically speaking, imprecise probabilities do not admit a single form but comprise a range of methods that attempt to encode higher-order uncertainties in various ways. This includes interval probabilities~\citep{keynes1921treatise}, random sets~\citep{kingman1975g}, fuzzy measures~\citep{sugeno1974theory}, belief functions~\citep{shafer1976mathematical}, higher-order probabilities~\citep{baron1987second,gaifman1986theory}, possibility theory~\citep{dubois1985theorie}, credal sets~\citep{dipk,ergo.th,constriction,caprio2024optimal}, and lower/upper probabilities~\citep{walley,novel_bayes}.


\subsection{Imprecise Probabilistic Machine Learning}

There has been an uprising momentum of imprecise probabilistic methods adapting to machine learning frameworks, owing to the increasing attention to safety critical applications of ML models, where proper uncertainty management is not only a good-to-have, but critical. This motivation leads to the field of \emph{Imprecise Probabilistic Machine Learning}, which focuses on inference and decision-making using generalisations of classical probability. IPML provides principled tools to handle imprecision arising from model misspecification and data uncertainty, with growing applications in classification~\citep{denoeux2000neural,zaffalon2002exact,sale2023volume,second-order}, hypothesis testing~\citep{chau2024credal,jurgens2025calibration}, scoring rules~\citep{frohlich2024scoring,singh2025truthful}, conformal prediction~\citep{stutz2023conformal,caprio2025conformal}, computer vision~\citep{cuzzolin1999evidential,giunchiglia2023road}, probabilistic programming~\citep{jack2025}, explainability~\citep{chau2023explaining,utkin2025imprecise, mohammadi2025exactgp}, neural networks~\citep{denoeux2000neural,caprio_IBNN,wang2024credal}, learning theory~\citep{caprio_credal_2024,sloman2025epistemicerrorsimperfectmultitask}, causal inference~\citep{cozman2000credal,zaffalon2023approximating}, active and continual learning~\citep{inn,ibcl}, fixed point theory~\citep{caprio2025credalfixed}, and many more.

\subsection{Credal Uncertainty Quantification Measures and MMI}

Uncertainty modelling generally involves asking the following two questions:
\begin{enumerate}
    \item How should uncertainty be represented? Should it be a set? a distribution? a set of distributions? Or distribution over sets?
    \item After deciding your representation of uncertainty, how can we numerically quantify the degree of uncertainty within such uncertainty representation? 
\end{enumerate}
As a simple illustration, one may model the variability of BMI records in a hospital using a probability distribution. Uncertainty can then be quantified in various ways, for example by computing the variance of the distribution or by evaluating its Shannon entropy, which measures the expected “surprise” of outcomes~\citep{stone2024information}.

As discussed in the main text, imprecise probabilistic predictors represent uncertainty about their probabilistic predictions through credal sets $\cM \subseteq \cP(\cY)$. While this specifies the representation of uncertainty—namely, sets of distributions—there remain multiple principled ways to quantify the uncertainty encoded in such credal sets~\citep{hullermeier_aleatoric_2021}. For example,
\begin{enumerate}
    \item \textbf{Maximal Entropy differences:}
    \begin{align*}
        \max_{\PP \in \cM} \operatorname{H}(\PP) - \min_{\QQ\in\cM}\operatorname{H(\QQ)}
    \end{align*}
    where $\operatorname{H}$ is the function that computes the Shannon entropy. This measure is particularly convenient for credal sets represented as the convex hull of a finite collection of distributions, since both extrema can be computed efficiently via standard linear programming. However, in this work, our credal set is generated through the core of the plausibility measure; such tricks cannot be applied here. Also, the entropy difference also suffers from violating the monotonicity properties that usual credal uncertainty measures should satisfy ~\citep{abellan}, i.e. if $\cM_1\subset \cM_2$ then the latter should be quantified larger than the former. With this measure, as long as the smaller credal set $\cM_1$ contains the distributions in the credal set $\cM_2$ corresponding to the maximal and minimal entropies, then their quantified uncertainty is the same.
    \item \textbf{Generalised Hartley Measure}~\citep{abellan2005additivity}, as the name suggests, is a generalisation of the classical Hartley measure~\citep{hartley1928transmission} to the imprecise probability setting. For a finite space $\cY$, and a given lower probability $\underline{\mathbb{P}}$, the generalised Hartley measure can be computed as 
    \begin{align*}
        \sum_{A\subseteq \cY} m_{\underline{\PP}}(A)\log_2(|A|)
    \end{align*}
    where the mass function $m_{\underline{\PP}}:2^\cY\to[0,1]$ is the Möbius inverse of $\underline{\mathbb{P}}$, defined  as 
    \begin{align*}
        m_{\underline{\mathbb{P}}}(A) = \sum_{B\subseteq A} (-1)^{(|A| - |B|)} \underline{\mathbb{P}}(B).
    \end{align*}   
    The generalised Hartley measure enjoys several desirable axiomatic properties established in the uncertainty literature~\citep{abellan_measures_2006,sale2023volume}. However, its computation is exponential in the outcome space, as it requires summation over all subsets. Moreover, \citet{chau2025integral} shows that the generalised Hartley measure and maximum mean imprecision (MMI) yield no empirically distinguishable performance differences in downstream tasks. Since MMI additionally admits an exact analytical solution computable in linear time, we therefore adopt MMI in this work. Also, a generalised Hartley measure for continuous spaces (e.g. $\RR$) does not exist. 
\end{enumerate}

\subsubsection{\textbf{Maximum Mean Imprecision}~\citep{chau2025integral}}

The \textbf{Maximum Mean Imprecision}~\citep{chau2025integral} is derived based on the idea of an \emph{integral imprecise probability metrics}, defined analogously to the classical integral probability metrics~\citep{muller1997integral}. Specifically, denote $C_b(\mathcal{Y})$ as the space of bounded continuous measurable functions from $\cY$ to $\RR$. For a function class $\cH\subseteq C_b(\cY)$ and two capacities $\nu,\mu$ (recall Definition \ref{def: capacities}), the integral imprecise probability metric associated with $\cH$ between $\nu$ and $\mu$ is defined as 
\begin{align*}
    \operatorname{IIPM}_{\cH}(\nu, \mu) = \sup_{f\in\cH}\left| \circint f d\nu - \circint f d\mu\right|
\end{align*}
where $\circint f d\nu$ is known as Choquet integral~\citep{choquet1953,decooman}, and is defined as 
\begin{align*}
    \circint f d\nu = \inf_{y}f(y) + \int_{\inf_y f(y)}^{\sup_y f(y)} \nu\left(\{ y: f(y)  \geq t\}\right) \;dt.
\end{align*}
This defines a valid metric for capacities, provided the test function space is rich enough, e.g., $\cH$ is dense in $C_b(\cY)$. 

The maximum mean imprecision is then simply the result of measuring the discrepancy between an upper probability and its conjugate lower probability through the use of an integral imprecise probability metric. Similarly, given a subset of test functions $\cH$, and an upper probability, the maximum mean imprecision is defined as
\begin{align*}
    \operatorname{MMI}_{\cH}(\overline{\PP_x}) = \sup_{f\in\cH} \left|\circint f d \overline{\PP_x} - \circint f d \underline{\PP_x}\right|
\end{align*}

While \citet{chau2025integral} introduced MMI for classification settings only using the total variation distance function class, in this work, we generalise further and consider a specific function class for plausibility measures, which yield computationally efficient analytical forms when computing the MMI for conformal prediction.

\newpage 
\section{Proofs and derivations}
\label{sec: proofs}

This section presents proofs and derivations in the main text.

\subsection{Proof for Proposition~\ref{prop: consonance upper prob}}

\propone*
\begin{proof}
    To show $\overline{\PP}_x$ is a valid upper probability, we show the following,
    \begin{enumerate}
        \item \textbf{Normalisation.} For $\emptyset$, $\overline{\PP}_x(\emptyset) = 0$. For $\cY$, we have $\overline{\PP}_x(\cY) = \sup_{y\in\cY} \pi_x(y) = 1$ due to consonance.
        \item \textbf{Monotonic.} For $A\subseteq B$ and $A, B\in \cF_\cY$, it is obvious that $\overline{\PP}_x(A) = \sup_{y\in A}\pi_x(y) \leq \sup_{y\in B} \pi_x(y) = \overline{\PP}_x(B)$.
        \item Consider the core $\mathcal{M}(\overline{\PP}_x)$, then it is easy to see that 
        \begin{align*}
            \overline{\PP}_x(A) = \sup_{\mathbb{P}\in \cM(\overline{\PP}_x)}\mathbb{P}(A) 
        \end{align*}
        for all $A\in\cF_\cY$. The fact that the core is compact follows from \citet[Proposition 3]{marinacci2}.
    \end{enumerate}
    Satisfying points 1 and 2 makes $\overline{\PP}_x$ a capacity, satisfying point 3 makes it a valid upper probability.
\end{proof}

\subsection{Proof for Proposition~\ref{prop: IHDR_equals_CPR}}
\proptwo*
\begin{proof}
    The proof follows from \citet[Proposition 5]{caprio2025conformal}. Consider the function $\gamma:\mathcal{Y}\to [0,1]$,
    \begin{align}
    y \mapsto \gamma(y) =
    \begin{cases}
    \pi_x(y), & \text{if } \pi_x(y)\leq 0.5\\
    1 - \pi_x(y), & \text{if } \pi_x(y) > 0.5 .
    \end{cases}
    \end{align}
    It is easy to see that $\gamma(y) \leq \pi_x(y)$ for all $y\in\cY$. In addition, by the consonance property of $\pi_x$, there exists $\tilde{y}\in\cY$ such that $\gamma(\tilde{y})=0$. In turn, we have that $[\gamma, \pi]$ forms a cloud~\citep[Definition 4.6]{augustin_introduction_2014}. Neumaier's probabilistic constraint on clouds~\citep[Equation 4.9]{augustin_introduction_2014} gives us 
    \begin{align*}
        \mathbb{P}(Y_{n+1} \in C_{\alpha}) &= \mathbb{P}(Y_{n+1} \in \{y\in \cY: \pi_x(y) > \alpha\})\\
        &\geq 1-\alpha \\
        &= \underline{\PP_x}(Y_{n+1}\in \operatorname{IR}^\cM_\alpha),
    \end{align*}
    for all $\mathbb{P}\in\mathcal{M}(\overline{\PP}_x)$ and $\underline{\PP_x}$ is the lower probability with respect to $\overline{\PP_x}$. In turn, we have 
    \begin{align*}
        \underline{\PP_x}(Y_{n+1}\in C_\alpha(x)) &= \underline{\PP_x}(Y_{n+1} \in \{y\in\cY: \pi_x(y) > \alpha\}) \\
        &\geq 1 - \alpha \\
        &= \underline{\PP_x}(Y_{n+1} \in \operatorname{IR}_\alpha^\cM).
    \end{align*}
    Since the imprecise highest density region $\operatorname{IR}_\alpha^\cM$ is the smallest subset that attains lower probability $1-\alpha$, it implies that $\operatorname{IR}_\alpha^\cM \subseteq C_\alpha(x)$. However, by the definition of CPR, $\operatorname{IR}_\alpha^\cM$ cannot be strictly included in $C_\alpha(x)$. In turn, this implies that $\operatorname{IR}_\alpha^\cM = C_\alpha(x)$.
\end{proof}

\subsection{Proof for Proposition~\ref{prop: mmi_tv}}
\propthree*
\begin{proof}
    Under consonance, there exists $\tilde{y}$ such that $\pi_x(\tilde{y}) = 1$. Now, recall the definition of MMI,
    \begin{align*}
        \operatorname{MMI}_{\cH_{\text{TV}}}(\overline{\PP_x}) 
        &= \sup_{A\in{\cF_\cY}} |\overline{\PP_x}(A) - \underline{\PP_x}(A) | \\
        &= \sup_{A\in{\cF_\cY}} |\overline{\PP_x}(A) - (1-\overline{\PP_x}(A^c)) |\\
        &= \sup_{A\in{\cF_\cY}} |\overline{\PP_x}(A) + \overline{\PP_x}(A^c)  - 1 |\\
        &= |\overline{\PP_x}(\{\tilde{y}\}) - 1 + \overline{\PP_x}(\{\tilde{y}\}^c)| \\ 
        &= |\overline{\PP_x}(\{\tilde{y}\}^c)| \\
        &= \pi_{\sigma(2)}
    \end{align*}
    The key observation is that the quantity $|\overline{\PP_x}(A) + \overline{\PP_x}(A^c) - 1$ is maximised when $A = \{\tilde{y}\}$. By consonance, this singleton contains the label with the largest conformal p-value, which is equal to 1. Consequently, $\overline{\PP_x}(A) = 1$, while $\overline{\PP_x}(A^c)$ attains the second-largest conformal p-value $\pi_{\sigma(2)}$.
\end{proof}

\subsection{Proof for Proposition \ref{prop: mmi_pi}}
\propfour*
\begin{proof}
    Under consonance, there exists $\tilde{y}$ such that $\pi_x(\tilde{y}) = 1$. Now, expanding MMI, we have
    \begin{align*}
        \operatorname{MMI}_{\{\pi_x\}}(\overline{\PP_x}) 
        &= \circint \pi_x d\overline{\PP_x} - \circint \pi_x d\underline{\PP_x} \\
        &= \int_0^1 \overline{\PP_x}(\{y\in\cY: \pi_x(y) \geq \alpha\})d\alpha - \int_0^1 \underline{\PP_x}(\{y\in\cY: \pi_x(y) \geq \alpha\})d\alpha \\
        &= \int_0^1 \sup_{y\in C_\alpha(x)} \pi_x(y) d\alpha - \left(1 - \int_0^1 \overline{\PP_x}(\{y\in\cY: \pi_x(y) < \alpha\})d\alpha\right) \\
        &= 1 - 1 + \int_0^1 \overline{\PP_x}(\{y\in\cY: \pi_x(y) < \alpha\})d\alpha \\ 
        &= \int_0^1 \sup_{y \not\in C_\alpha(x)} \pi_x(y)\;d\alpha \\
        &= \int_0^1 \sup_{y\not\in C_\alpha(x)} \frac{1 + |\{S_i\in S_{\text{cal}}: S_i\geq s(x,y)\}|} {1+n_{\text{cal}}} \; d\alpha \\
        &= \int_0^1  \frac{1 + |\{S_i\in S_{\text{cal}}: S_i\geq \inf_{y\not\in C_\alpha(x)} s(x,y)\}|} {1+n_{\text{cal}}} \; d\alpha \\
        &= \int_0^1 \frac{1 + |B_\alpha|} {1+n_{\text{cal}}} \; d\alpha.
    \end{align*}
    The key observation for this proof is that due to consonance, the integral $\int_0^1 \max_{y\in C_\alpha(x)} \pi_x(y) d\alpha = \int_0^1 1\;d\alpha$, therefore it becomes $1$.
\end{proof}

\subsection{Proof for Proposition \ref{prop: k_class_quantifiers}}
\propfive*
\begin{proof}
    Utilising results proven from Proposition~\ref{prop: mmi_pi}, we start from 
    \begin{align*}
        \operatorname{MMI}_{\{\pi_x\}}(\overline{\PP_x}) 
        &= \int_0^1 \sup_{y\not\in C_\alpha(x)} \pi_x(y) \;d\alpha \\
        &= \int_0^1 \sum_{k=2}^{K+1}\sup_{y\not\in C_\alpha(x)} \pi_x(y) \mathbf{1}\left[\sup_{y\not\in C_\alpha(x)} \pi_x(y) = \pi_{\sigma(k)}\right] \; d\alpha\\
        &= \sum_{k=2}^{K+1}\pi_{\sigma(k)}\int_0^1 \mathbf{1}\left[\sup_{y\not\in C_\alpha(x)} \pi_x(y) = \pi_{\sigma(k)}\right] \; d\alpha\\
        &\overset{\diamond}{=} \sum_{k=2}^{K+1} \left(\pi_{\sigma(k-1)} - \pi_{\sigma(k)}\right)\cdot \pi_{\sigma(k)}.
    \end{align*}
    To see why step $\diamond$ holds, consider ``sweeping'' $\alpha$ upwards from $0$, then there are $\pi_{\sigma(K)}$ amount of ``time'' that the set $$C_\alpha(x)^c = \{y\in\cY: \pi_x(y) < \alpha\}$$ is empty. As we keep increasing $\alpha$, the first time this set $C_\alpha(x)^c$ becomes non-empty is when $\alpha$ is sandwiched between $\pi_{\sigma(K-1)}$ and $\pi_{\sigma(K)}$, which then $$\overline{\PP_x}(C_\alpha(x)^c) = \pi_{\sigma(K)}.$$ Now keep increasing $\alpha$, keeping track of how much ``time'' $\alpha$ stays when $\overline{\PP_x}(C_\alpha(x)^c) = \pi_{\sigma(k)}$. Now, sum up these fragments, and you get the full sum.

\end{proof}

\subsection{Proof for Proposition~\ref{prop: regression_mmi_ch}}
\propsix*
\begin{proof}
    Starting with the expression
    \begin{align*}
        \operatorname{MMI}_{\{\pi_x\}}(\overline{\PP_x}) 
        &= \int_{0}^1\sup_{y \not\in C_\alpha(x)} \pi_x(y) \; d\alpha \\
        &= \int_0^1 \sup_{y\not\in C_\alpha(x)} \frac{1 + |\{S_i\in S_{\text{cal}}: S_i\geq s(x,y)\}|} {1+n_{\text{cal}}} \; d\alpha \\
        &= \int_0^1  \frac{1 + |\{S_i\in S_{\text{cal}}: S_i\geq \inf_{y\not\in C_\alpha(x)} s(x,y)\}|} {1+n_{\text{cal}}} \; d\alpha \\
        &= \int_0^1 \frac{1 + |B_\alpha|} {1+n_{\text{cal}}} \; d\alpha.
    \end{align*}
    Now, if there exits $y\not\in C_\alpha(x)$ such that $s(x,y) = \hat{q}_{1-\alpha}$, then by the definition of empirical $(1-\alpha)$ quantile, we have
        \begin{align*}
         |B_\alpha| 
        &= |\{S_i: S_i\geq \hat{q}_{1-\alpha}\}| \\
        &= n_{\text{cal}} + 1 - \lceil(n_{\text{cal}}+1)(1-\alpha)\rceil   
        \end{align*}
    therefore,
    \begin{align}
        \operatorname{MMI}_{\{\pi_x\}}(\overline{\PP_x}) = 1 + \int_0^1 \frac{1 - \lceil(n_\text{cal} + 1)(1-\alpha)\rceil}{n_{\text{cal}}  + 1}\; d\alpha.
    \end{align}
    The key observation is to realise that the supremum of the fraction is realised when $y\not\in C_\alpha(x)$ is chosen such that $s(x,y)$ is as small as possible. Now for the score functions we consider,
    \begin{itemize}
        \item For \textbf{Absolute residual error} $s_1(x,y) = |\hat{f}(x) - y|$, we know $\inf_{y\not\in C_\alpha(x)} s_1(x,y)$ is attained when $y^\star = \hat{f}(x) + \hat{q}_{1-\alpha}$, in that case, we have 
        \begin{align*}
            \inf_{y\not\in C_\alpha(x)} s_1(x, y)
            &= s_1(x,y^\star ) \\
            &= |\hat{f}(x) - \hat{f}(x) - \hat{q}_{1-\alpha}| \\
            &= \hat{q}_{1-\alpha}
        \end{align*}
        therefore, by the definition of empirical $(1-\alpha)$ quantile, we have
        \begin{align*}
         |B_\alpha| 
        &= |\{S_i: S_i\geq \hat{q}_{1-\alpha}\}| \\
        &= (n_\text{cal} + 1) - \lceil(n_{\text{cal}}+1)(1-\alpha)\rceil   
        \end{align*}
        \item Similarly, for \textbf{Weighted residual error} $s_2(x,y) = \frac{|y-\hat{f}(x)|}{w(x)}$, we know the infimum $\inf_{y\notin C_\alpha} s_2(x,y)$ is attained when $y^\star = \hat{f}(x) + w(x)\hat{q}_{1-\alpha}$. In that case, again, we have
        \begin{align*}
            \inf_{y\not\in C_\alpha(x)} s_2(x, y) 
            &= s_2(x,y^\star) \\ 
            &= |\hat{f}(x) - \hat{f}(x) -w(x)\hat{q}_{1-\alpha}|/w(x) \\
            &= \hat{q}_{1-\alpha}.
        \end{align*}
        With the same reasoning as above, $|B_\alpha| = (n_\text{cal} + 1) - \lceil(n_{\text{cal}} + 1)(1-\alpha)\rceil.$
        \item For \textbf{quantile regression scores} $s_3(x,y) = \max(\hat{f}_\ell(x) - y, y-\hat{f}_u(x))$, the infimum is attained when $y$ takes the value $\hat{f}_u(x) + \hat{q}_{1-\alpha}$, which similary then result in
        \begin{align*}
            \inf_{y\not\in C_\alpha(x)} s_3(x,y) = \hat{q}_{1-\alpha},
        \end{align*}
        which leads to $|B_\alpha| = (n_\text{cal}+1 ) - \lceil(n_{\text{cal}}+1 )(1-\alpha)\rceil.$
    \end{itemize}
    Therefore, regardless of choosing $s_1,s_2,s_3$, as long as $\inf_{y\not\in C_\alpha(x)}s(x,y) = \hat{q}_{1-\alpha}$, we get
    \begin{align*}
        \operatorname{MMI}_{\{\pi_x\}}(\overline{\PP_x}) = 1 + \int_0^1 \frac{1 - \lceil(n_\text{cal}+1)(1-\alpha)\rceil}{1 + n_\text{cal}} \; d\alpha.
    \end{align*}

\end{proof}




\subsection{Proof for Proposition \ref{prop: consonance_effect}}

\propseven*

\begin{proof}
    Let $\tilde{\pi}_x$ be the modified consonant transducer from the original conformal transducer $\pi$, following the construction outlined in Equation~\ref{eq: consonance}. 
    
    \textbf{Point 1:} To show the first point, consider the two complementary scenario
    \begin{enumerate}
        \item When $\alpha$ is chosen such that for all labels $y\in\cY$, $\pi_x(y) < \alpha$, then $C_\alpha(x) = \emptyset$. In this case, since the label with the largest conformal p-value is now assigned 1, the set $\tilde{C}_\alpha(x)$ will instead return $\{y_{\sigma(1)\}}$ as the prediction set.
        \item When $\alpha$ is chosen such that there exists at least one label $y\in\cY$, such that $\pi_x(y) > \alpha$, then the label with the largest conformal p-value must be within the set $\{y: \pi_x(y)>\alpha\}$, as such, even if we stretch the conformal p-value to $1$, it does not affect the prediction set at all, since that label already belongs to the prediction set.
    \end{enumerate}
    \textbf{Point 2:} To show the second point, recall that no matter which $\alpha$ we choose, we always have
    \begin{align*}
        C_{\alpha}(x) \subseteq \tilde{C}_\alpha(x).
    \end{align*}
    Therefore, since for any data $(X_{n+1}, Y_{n+1})$, we have the conformal guarantee of
    \begin{align*}
        \mathbb{P}(Y_{n+1}\in C_{\alpha}(X_{n+1})) \geq 1-\alpha,
    \end{align*}
    then we have,
    \begin{align*}
        \mathbb{P}(Y_{n+1} \in \tilde{C}_\alpha(X_{n+1))} 
        &\geq \mathbb{P}(Y_{n+1}\in C_{\alpha}(X_{n+1})) \\ 
        &\geq 1-\alpha,
    \end{align*}
    therefore also satisfying the usual marginal guarantee.
\end{proof}

\subsection{Proof for Theorem \ref{thm: uniform validity}}

\theoremone*

\begin{proof}
    The result is immediate from \citet[Theorem 1]{cella}.
\end{proof}
\newpage 
\section{Experimental details and further ablation studies}
\label{appendix: exp}


\paragraph{Experimental setup.}
We provide experimental details in this appendix. All experiments were conducted on a MacBook Pro equipped with an Apple M4 Pro chip, a part of our code is built on the Python module TorchCP~\citep{huang2024torchcp}.

In both the active learning and selective classification experiments, we assume access to a single predictive model
\(\hat{f}:\mathcal{X}\to\Delta_{K-1}\).
We quantify the difficulty of predicting \(f(X_{n+1})\) using the various epistemic predictive uncertainty (EPU) measures proposed in this work, such MMI-based ones and the set sizes measured at different level of $\alpha$.

Overall, both experimental settings rely on the mapping
\[
X_{n+1} \;\mapsto\; \bigl(\hat{f}(X_{n+1}), \operatorname{EPU}(X_{n+1})\bigr),
\]
where \(\operatorname{EPU}(X_{n+1})\) denotes the quantified epistemic uncertainty at \(X_{n+1}\). Depending on the method, this may correspond either to prediction-set sizes or to our proposed MMI-based measures.

Predictions are made solely based on \(\hat{f}(X_{n+1})\), while downstream decisions—such as selecting the next point to query in active learning or deciding whether to abstain in selective classification—are driven by \(\operatorname{EPU}(X_{n+1})\).

\paragraph{Active learning.}

In active learning, we used the following four datasets,
\begin{enumerate}
    \item \textbf{Digits v1}: This digit dataset is used in \citet{jain2000statistical} and is one of a set of 6 datasets describing features of handwritten numerals (0 - 9) extracted from a collection of Dutch utility maps. Corresponding patterns in different datasets correspond to the same original character. 200 instances per class (for a total of $2000$ instances) have been digitised in binary images. It has $10$ classes with $216$ features.
    \item \textbf{Digits v2:} This dataset corresponds to the \emph{Optical Recognition of Handwritten Digits} (optdigits)~\citep{Alpaydin1998Cascading} dataset from the OpenML repository. It contains 5,620 instances across 10 classes, with 64 input features.
    \item \textbf{Digits v3:} This digit dataset is also based on the ones used in \citet{jain2000statistical} and is also one of a set of 6 datasets describing features of handwritten numerals (0 - 9) extracted from a collection of Dutch utility maps. The attributes represent 64 descriptors from the Karhunen-Loeve Transform, a linear transform that corresponds to the projection of the images on the eigenvectors of a covariance matrix.
    \item \textbf{Letters:} We use the Letter Image Recognition dataset introduced by \citet{Frey1991HollandStyle}. The task is to classify black-and-white rectangular pixel displays into one of the 26 uppercase letters of the English alphabet. The dataset is constructed from 20 different fonts, where each letter instance is randomly distorted to generate a total of 20,000 unique stimuli. Each stimulus is represented by 16 numerical attributes, including statistical moments and edge-count features, which are scaled to integer values in the range $[0,15]$.
\end{enumerate}

For each dataset, we train a random forest classifier with 100 trees, using randomly initialised models and $100$, $100$, $100$, and $500$ training observations for the four datasets, respectively. Prediction is tested against our withheld test dataset that is of $20\%$ size of the whole dataset.

\paragraph{Selective Classification.} For selective classification, we evaluate our methods on standard benchmark datasets, including CIFAR10 and CIFAR100~\citep{KrizhevskyHinton2009TinyImages}, CALTECH101~\citep{FeiFei2004Caltech101}, and FashionMNIST~\citep{Xiao2017FashionMNIST}.
For CIFAR10, we employ pretrained ResNet56 model weights provided by PyTorch~\citep{paszke2019pytorch} as the base predictive model. For CIFAR100, we use pretrained ResNet20 weights. For CALTECH101 and FashionMNIST, we adopt ResNet18 and ResNet34 architectures, respectively, with pretrained weights obtained from HuggingFace.


\subsection{Statistical Significance tables}

We report here the statistical significance results for the experiments presented in the main text. For both the active learning and selective classification settings, we compare model performance across random seeds—specifically, the final-stage accuracy for active learning and the area under the curve (AUC) for selective classification. Since the same data splits and random seeds are shared across methods, these comparisons yield paired observations. Accordingly, a standard and widely adopted approach is to apply the Wilcoxon signed-rank test to assess whether the observed performance differences are statistically significant. Specifically, the null is that the methods perform similarly, against the alternative that a particular method performs better than the other in terms of the metric. For computing the test error we randomly pick $1000$ samples from the test data loader in these modules.

\paragraph{Active learning results}

\begin{table}[H]
    \centering
    \caption{\textbf{(Digits v1 dataset)} Statistical significance table for \textbf{active learning}. Entries in cell $i,j$ reports the p-value for the one-sided Wilcoxon signed rank test. p-value less than $0.05$ (highlighted in red) implies method $i$ is better than method $j$ significantly.}
    \resizebox{\textwidth}{!}{%
\begin{tabular}{lrrrrrrr}
\toprule
 & $|C_{0.01}(\cdot)|$ & $|C_{0.05}(\cdot)|$ & $|C_{0.1}(\cdot)|$ & $|C_{0.2}(\cdot)|$ & $|C_{0.3}(\cdot)|$ & MMI-CP(sum) & MMI-CP(sup) \\
\midrule
$|C_{0.01}(\cdot)|$
& NaN
& 1.000
& 0.997
& 1.000
& 1.000
& 1.000
& 1.000 \\

$|C_{0.05}(\cdot)|$
& \cellcolor{red!25}0.001
& NaN
& 0.068
& 0.476
& 0.500
& 0.982
& 0.169 \\

$|C_{0.1}(\cdot)|$
& \cellcolor{red!25}0.007
& 0.932
& NaN
& 0.920
& 0.902
& 0.997
& 0.784 \\

$|C_{0.2}(\cdot)|$
& \cellcolor{red!25}0.001
& 0.524
& 0.116
& NaN
& 0.500
& 0.994
& 0.209 \\

$|C_{0.3}(\cdot)|$
& \cellcolor{red!25}0.001
& 0.539
& 0.098
& 0.500
& NaN
& 0.966
& 0.278 \\

MMI-CP(sum)
& \cellcolor{red!25}0.001
& \cellcolor{red!25}0.018
& \cellcolor{red!25}0.005
& \cellcolor{red!25}0.006
& \cellcolor{red!25}0.034
& NaN
& \cellcolor{red!25}0.010 \\

MMI-CP(sup)
& \cellcolor{red!25}0.001
& 0.831
& 0.246
& 0.791
& 0.784
& 0.990
& NaN \\
\bottomrule
\end{tabular}
}
    
    \label{tab:placeholder}
\end{table}

\begin{table}[H]
    \centering
    \caption{\textbf{(Digits v2 dataset)} Statistical significance table for \textbf{active learning}. Entries in cell $i,j$ reports the p-value for the one-sided Wilcoxon signed rank test. p-value less than $0.05$ (highlighted in red) implies method $i$ is better than method $j$ significantly.}
\resizebox{\textwidth}{!}{%
\begin{tabular}{lrrrrrrr}
\toprule
 & $|C_{0.01}(\cdot)|$ & $|C_{0.05}(\cdot)|$ & $|C_{0.1}(\cdot)|$ & $|C_{0.2}(\cdot)|$ & $|C_{0.3}(\cdot)|$ & MMI-CP(sum) & MMI-CP(sup) \\
\midrule
$|C_{0.01}(\cdot)|$
& NaN
& 1.000
& 1.000
& 1.000
& 1.000
& 1.000
& 1.000 \\

$|C_{0.05}(\cdot)|$
& \cellcolor{red!25}0.001
& NaN
& 0.461
& 0.781
& 0.800
& 0.998
& 1.000 \\

$|C_{0.1}(\cdot)|$
& \cellcolor{red!25}0.001
& 0.615
& NaN
& 0.754
& 0.935
& 1.000
& 0.999 \\

$|C_{0.2}(\cdot)|$
& \cellcolor{red!25}0.001
& 0.219
& 0.278
& NaN
& 0.694
& 0.998
& 1.000 \\

$|C_{0.3}(\cdot)|$
& \cellcolor{red!25}0.001
& 0.200
& 0.080
& 0.306
& NaN
& 1.000
& 1.000 \\

MMI-CP(sum)
& \cellcolor{red!25}0.001
& \cellcolor{red!25}0.003
& \cellcolor{red!25}0.001
& \cellcolor{red!25}0.003
& \cellcolor{red!25}0.001
& NaN
& 0.257 \\

MMI-CP(sup)
& \cellcolor{red!25}0.001
& \cellcolor{red!25}0.001
& \cellcolor{red!25}0.002
& \cellcolor{red!25}0.001
& \cellcolor{red!25}0.001
& 0.743
& NaN \\
\bottomrule
\end{tabular}
}
    
    \label{tab:placeholder}
\end{table}

\begin{table}[H]
    \centering
    \caption{\textbf{(Digits v3 dataset)} Statistical significance table for \textbf{active learning}. Entries in cell $i,j$ reports the p-value for the one-sided Wilcoxon signed rank test. p-value less than $0.05$ (highlighted in red) implies method $i$ is better than method $j$ significantly.}
\resizebox{\textwidth}{!}{%
\begin{tabular}{lrrrrrrr}
\toprule
 & $|C_{0.01}(\cdot)|$ & $|C_{0.05}(\cdot)|$ & $|C_{0.1}(\cdot)|$ & $|C_{0.2}(\cdot)|$ & $|C_{0.3}(\cdot)|$ & MMI-CP(sum) & MMI-CP(sup) \\
\midrule
$|C_{0.01}(\cdot)|$
& NaN
& 0.992
& 0.997
& 1.000
& 1.000
& 1.000
& 1.000 \\

$|C_{0.05}(\cdot)|$
& \cellcolor{red!35}0.008
& NaN
& 0.813
& 1.000
& 0.997
& 0.990
& 0.920 \\

$|C_{0.1}(\cdot)|$
& \cellcolor{red!35}0.005
& 0.216
& NaN
& 0.903
& 0.813
& 0.968
& 0.663 \\

$|C_{0.2}(\cdot)|$
& \cellcolor{red!35}0.001
& \cellcolor{red!35}0.001
& 0.116
& NaN
& 0.240
& 0.584
& \cellcolor{red!35}0.035 \\

$|C_{0.3}(\cdot)|$
& \cellcolor{red!35}0.001
& \cellcolor{red!35}0.007
& 0.216
& 0.760
& NaN
& 0.754
& 0.162 \\

MMI-CP(sum)
& \cellcolor{red!35}0.001
& \cellcolor{red!35}0.014
& \cellcolor{red!35}0.032
& 0.416
& 0.313
& NaN
& 0.053 \\

MMI-CP(sup)
& \cellcolor{red!35}0.001
& 0.116
& 0.337
& 0.965
& 0.838
& 0.947
& NaN \\
\bottomrule
\end{tabular}
}
    
    \label{tab:placeholder}
\end{table}

\begin{table}[H]
    \centering
    \caption{\textbf{(Letters dataset)} Statistical significance table for \textbf{active learning}. Entries in cell $i,j$ reports the p-value for the one-sided Wilcoxon signed rank test. p-value less than $0.05$ (highlighted in red) implies method $i$ is better than method $j$ significantly.}
\resizebox{\textwidth}{!}{%
\begin{tabular}{lrrrrrrr}
\toprule
 & $|C_{0.01}(\cdot)|$ & $|C_{0.05}(\cdot)|$ & $|C_{0.1}(\cdot)|$ & $|C_{0.2}(\cdot)|$ & $|C_{0.3}(\cdot)|$ & MMI-CP(sum) & MMI-CP(sup) \\
\midrule
$|C_{0.01}(\cdot)|$
& NaN
& 0.991
& 0.999
& 1.000
& 1.000
& 1.000
& 1.000 \\

$|C_{0.05}(\cdot)|$
& \cellcolor{red!25}0.009
& NaN
& 0.968
& 0.994
& 0.976
& 0.988
& 0.976 \\

$|C_{0.1}(\cdot)|$
& \cellcolor{red!25}0.002
& \cellcolor{red!25}0.042
& NaN
& 0.500
& 0.348
& 0.690
& 0.405 \\

$|C_{0.2}(\cdot)|$
& \cellcolor{red!25}0.001
& \cellcolor{red!25}0.006
& 0.539
& NaN
& 0.237
& 0.696
& 0.428 \\

$|C_{0.3}(\cdot)|$
& \cellcolor{red!25}0.001
& \cellcolor{red!25}0.042
& 0.688
& 0.763
& NaN
& 0.813
& 0.583 \\

MMI-CP(sum)
& \cellcolor{red!25}0.001
& \cellcolor{red!25}0.012
& 0.310
& 0.304
& 0.216
& NaN
& 0.275 \\

MMI-CP(sup)
& \cellcolor{red!25}0.001
& \cellcolor{red!25}0.042
& 0.595
& 0.572
& 0.417
& 0.725
& NaN \\
\bottomrule
\end{tabular}
}
    
    \label{tab:placeholder}
\end{table}

\begin{table}[H]
    \centering
    \caption{\textbf{(FMNIST dataset)} Statistical significance table for \textbf{selective classification experiments}. Entries in cell $i,j$ report the p-value for the one-sided Wilcoxon signed rank test. p-value less than $0.05$ (highlighted in red) implies method $i$ is better than method $j$ significantly.}
\resizebox{\textwidth}{!}{%
\begin{tabular}{lrrrrrrr}
\toprule
 & $|C_{0.01}(\cdot)|$ & $|C_{0.05}(\cdot)|$ & $|C_{0.1}(\cdot)|$ & $|C_{0.2}(\cdot)|$ & $|C_{0.3}(\cdot)|$ & MMI-CP(sum) & MMI-CP(sup) \\
\midrule
$|C_{0.01}(\cdot)|$
& NaN
& \cellcolor{red!25}0.005
& \cellcolor{red!25}0.003
& \cellcolor{red!25}0.014
& \cellcolor{red!25}0.014
& 0.065
& 0.571 \\

$|C_{0.05}(\cdot)|$
& 0.997
& NaN
& 0.577
& 0.882
& 0.947
& 0.958
& 1.000 \\

$|C_{0.1}(\cdot)|$
& 0.998
& 0.461
& NaN
& 0.539
& 0.839
& 0.884
& 0.997 \\

$|C_{0.2}(\cdot)|$
& 0.990
& 0.118
& 0.500
& NaN
& 0.652
& 0.779
& 1.000 \\

$|C_{0.3}(\cdot)|$
& 0.993
& 0.065
& 0.188
& 0.385
& NaN
& 0.839
& 0.999 \\

MMI-CP(sum)
& 0.947
& 0.042
& 0.138
& 0.221
& 0.216
& NaN
& 0.997 \\

MMI-CP(sup)
& 0.429
& \cellcolor{red!25}0.001
& \cellcolor{red!25}0.007
& \cellcolor{red!25}0.001
& \cellcolor{red!25}0.002
& \cellcolor{red!25}0.005
& NaN \\
\bottomrule
\end{tabular}
}
    \label{tab:placeholder}
\end{table}

\begin{table}[H]
    \centering
    \caption{\textbf{(CIFAR10 dataset)} Statistical significance table for \textbf{selective classification experiments}. Entries in cell $i,j$ report the p-value for the one-sided Wilcoxon signed rank test. p-value less than $0.05$ (highlighted in red) implies method $i$ is better than method $j$ significantly.}
\resizebox{\textwidth}{!}{%
\begin{tabular}{lrrrrrrr}
\toprule
 & $|C_{0.01}(\cdot)|$ & $|C_{0.05}(\cdot)|$ & $|C_{0.1}(\cdot)|$ & $|C_{0.2}(\cdot)|$ & $|C_{0.3}(\cdot)|$ & MMI-CP(sum) & MMI-CP(sup) \\
\midrule
$|C_{0.01}(\cdot)|$
& NaN
& \cellcolor{red!35}0.001
& \cellcolor{red!35}0.001
& \cellcolor{red!35}0.001
& \cellcolor{red!35}0.001
& 0.993
& 0.920 \\

$|C_{0.05}(\cdot)|$
& 1.000
& NaN
& \cellcolor{red!35}0.010
& \cellcolor{red!35}0.001
& \cellcolor{red!35}0.001
& 1.000
& 0.999 \\

$|C_{0.1}(\cdot)|$
& 1.000
& 0.993
& NaN
& \cellcolor{red!35}0.003
& \cellcolor{red!35}0.005
& 1.000
& 0.999 \\

$|C_{0.2}(\cdot)|$
& 1.000
& 1.000
& 0.998
& NaN
& 0.097
& 1.000
& 0.999 \\

$|C_{0.3}(\cdot)|$
& 1.000
& 1.000
& 0.997
& 0.920
& NaN
& 1.000
& 1.000 \\

MMI-CP(sum)
& \cellcolor{red!35}0.010
& \cellcolor{red!35}0.001
& \cellcolor{red!35}0.001
& \cellcolor{red!35}0.001
& \cellcolor{red!35}0.001
& NaN
& 0.615 \\

MMI-CP(sup)
& 0.097
& \cellcolor{red!35}0.002
& \cellcolor{red!35}0.002
& \cellcolor{red!35}0.002
& \cellcolor{red!35}0.001
& 0.423
& NaN \\
\bottomrule
\end{tabular}
}
    
    \label{tab:placeholder}
\end{table}

\begin{table}[H]
    \centering
    \caption{\textbf{(CALTECH101 dataset)} Statistical significance table for \textbf{selective classification experiments}. Entries in cell $i,j$ report the p-value for the one-sided Wilcoxon signed rank test. p-value less than $0.05$ (highlighted in red) implies method $i$ is better than method $j$ significantly.}
\resizebox{\textwidth}{!}{%
\begin{tabular}{lrrrrrrr}
\toprule
 & $|C_{0.01}(\cdot)|$ & $|C_{0.05}(\cdot)|$ & $|C_{0.1}(\cdot)|$ & $|C_{0.2}(\cdot)|$ & $|C_{0.3}(\cdot)|$ & MMI-CP(sum) & MMI-CP(sup) \\
\midrule
$|C_{0.01}(\cdot)|$
& NaN
& 0.385
& \cellcolor{red!35}0.019
& \cellcolor{red!35}0.001
& \cellcolor{red!35}0.001
& 0.999
& 1.000 \\

$|C_{0.05}(\cdot)|$
& 0.652
& NaN
& \cellcolor{red!35}0.002
& \cellcolor{red!35}0.001
& \cellcolor{red!35}0.001
& 1.000
& 1.000 \\

$|C_{0.1}(\cdot)|$
& 0.986
& 0.999
& NaN
& \cellcolor{red!35}0.001
& \cellcolor{red!35}0.001
& 1.000
& 1.000 \\

$|C_{0.2}(\cdot)|$
& 1.000
& 1.000
& 1.000
& NaN
& 0.065
& 1.000
& 1.000 \\

$|C_{0.3}(\cdot)|$
& 1.000
& 1.000
& 1.000
& 0.947
& NaN
& 1.000
& 1.000 \\

MMI-CP(sum)
& \cellcolor{red!35}0.002
& \cellcolor{red!35}0.001
& \cellcolor{red!35}0.001
& \cellcolor{red!35}0.001
& \cellcolor{red!35}0.001
& NaN
& 0.385 \\

MMI-CP(sup)
& \cellcolor{red!35}0.001
& \cellcolor{red!35}0.001
& \cellcolor{red!35}0.001
& \cellcolor{red!35}0.001
& \cellcolor{red!35}0.001
& 0.652
& NaN \\
\bottomrule
\end{tabular}
}
    
    \label{tab:placeholder}
\end{table}

\begin{table}[H]
    \centering
    \caption{\textbf{(CIFAR100 dataset)} Statistical significance table for \textbf{selective classification experiments.} Entries in cell $i,j$ report the p-value for the one-sided Wilcoxon signed rank test. p-value less than $0.05$ (highlighted in red) implies method $i$ is better than method $j$ significantly.}
\resizebox{\textwidth}{!}{%
\begin{tabular}{lrrrrrrr}
\toprule
 & $|C_{0.01}(\cdot)|$ & $|C_{0.05}(\cdot)|$ & $|C_{0.1}(\cdot)|$ & $|C_{0.2}(\cdot)|$ & $|C_{0.3}(\cdot)|$ & MMI-CP(sum) & MMI-CP(sup) \\
\midrule
$|C_{0.01}(\cdot)|$
& NaN
& 1.000
& 0.976
& \cellcolor{red!35}0.001
& \cellcolor{red!35}0.001
& 1.000
& 1.000 \\

$|C_{0.05}(\cdot)|$
& \cellcolor{red!35}0.001
& NaN
& \cellcolor{red!35}0.014
& \cellcolor{red!35}0.001
& \cellcolor{red!35}0.001
& 1.000
& 1.000 \\

$|C_{0.1}(\cdot)|$
& \cellcolor{red!35}0.032
& 0.990
& NaN
& \cellcolor{red!35}0.001
& \cellcolor{red!35}0.001
& 1.000
& 1.000 \\

$|C_{0.2}(\cdot)|$
& 1.000
& 1.000
& 1.000
& NaN
& \cellcolor{red!35}0.005
& 1.000
& 1.000 \\

$|C_{0.3}(\cdot)|$
& 1.000
& 1.000
& 1.000
& 0.997
& NaN
& 1.000
& 1.000 \\

MMI-CP(sum)
& \cellcolor{red!35}0.001
& \cellcolor{red!35}0.001
& \cellcolor{red!35}0.001
& \cellcolor{red!35}0.001
& \cellcolor{red!35}0.001
& NaN
& 0.080 \\

MMI-CP(sup)
& \cellcolor{red!35}0.001
& \cellcolor{red!35}0.001
& \cellcolor{red!35}0.001
& \cellcolor{red!35}0.001
& \cellcolor{red!35}0.001
& 0.935
& NaN \\
\bottomrule
\end{tabular}
}
    
    \label{tab:placeholder}
\end{table}

\subsection{Ablation study on the choice of nonconformity scores}
In this ablation study, we investigate how the choice of nonconformity score influences the empirical results. Our theoretical analysis is largely score-agnostic, with the exception of \cref{prop: regression_mmi_ch}, which specifically concerns conformal regression. To examine the practical impact of different score functions, we consider the following alternatives:
\begin{enumerate}
    \item LAC: Least Ambiguous Classifiers (LAC)
    \item APS: Adaptive Prediction Sets (APS)~\citep{romano2020classification}, the score we considered in the main text. 
    \item RAPS: Regularised Adaptive Prediction Sets~\citep{angelopoulosuncertainty2021}
    \item Margins: Margin non-conformity score~\citep{lofstrom2015bias}
    \item EntmaxScore: Score functions based on gamma-entmax transformations~\citep{campos2025sparse}
\end{enumerate}

We then rerun the selective classification experiments on the CIFAR-100 dataset using each score; the experimental configurations are entirely identical to the ones run in the main text, only the score is different. The results are shown in \cref{fig:abalation_scores}. Overall, we observe that both MMI-based approaches consistently outperform methods based solely on prediction set size, indicating that their empirical advantage is robust to the choice of nonconformity score. This behaviour is intuitive: although different nonconformity scores affect how conformal prediction constructs conformal p-values, both MMI and prediction set size ultimately rely on the same information encoded by these p-values. The observed performance gap, therefore, again suggests that MMI is able to extract richer information about the underlying epistemic uncertainty induced by conformal prediction, beyond what is captured by set size alone.

\begin{figure}[H]
    \centering
    \includegraphics[width=\linewidth]{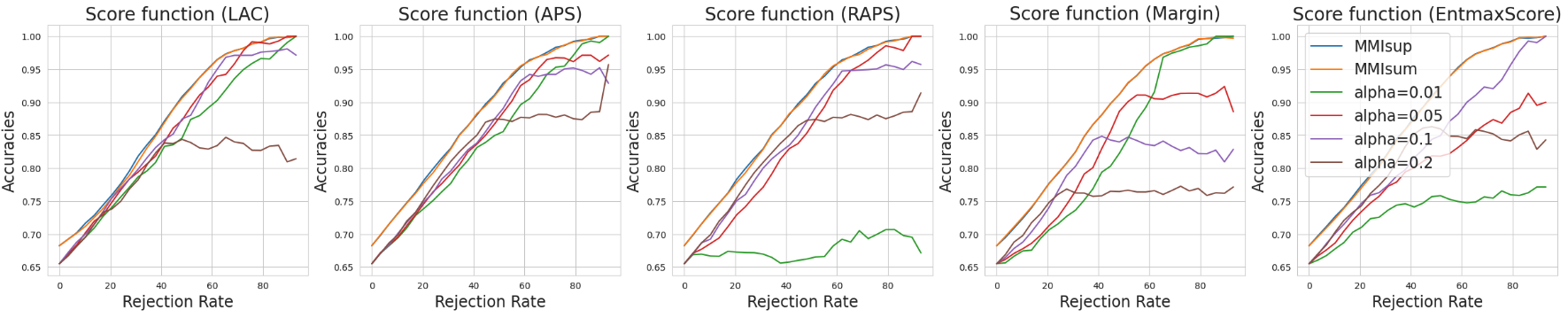}
    \caption{Examining how the choice of nonconformity score affects performance in selective classification on the CIFAR-100 dataset.
}
    \label{fig:abalation_scores}
\end{figure}


\subsection{Ablation study on the sizes of the calibration set}

Unlike full conformal prediction, split conformal prediction introduces an additional algorithmic component that can influence the resulting CP procedure: the choice of the calibration set. Here we ask ourselves, how does the calibration set size, when keeping every other experimental condition the same, affect the ranking of the methods? We fix the experimental conditions for the selective classification experiments using the CIFAR100 dataset. The results are shown in \cref{fig: number_calibration}. We see that the overall ranking of various approaches stays consistent, and MMI-based methods still outperform the rest.

\begin{figure}
    \centering
    \includegraphics[width=\linewidth]{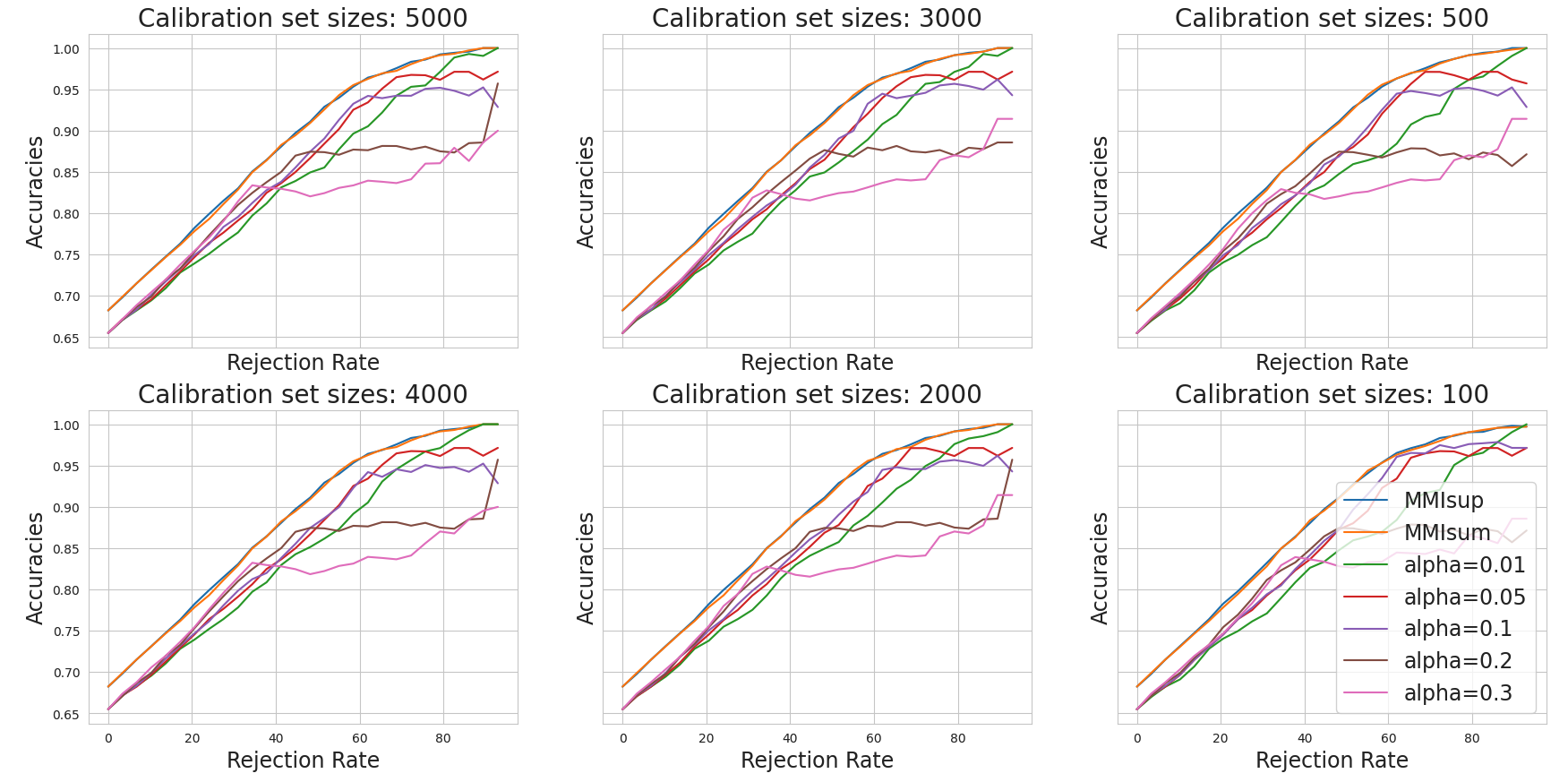}
    \caption{Selective classification results as we vary the number of calibration set sizes. The overall ranking of various approaches stays consistent. MMI-based methods still outperform the rest.}
    \label{fig: number_calibration}
\end{figure}

\end{document}